\documentclass[sigconf,nonacm]{acmart}

\usepackage[utf8]{inputenc}
\usepackage[T1]{fontenc}
\usepackage{booktabs}
\usepackage{graphicx}
\usepackage{xcolor}
\usepackage{geometry}
\usepackage{enumitem}
\usepackage{tikz}
\usetikzlibrary{shapes.geometric, arrows.meta, positioning, fit, backgrounds, calc}
\usepackage{array}
\usepackage{hyperref}

\definecolor{accent}{HTML}{2563EB}
\definecolor{accent2}{HTML}{7C3AED}
\definecolor{accent3}{HTML}{059669}
\definecolor{lightbg}{HTML}{F0F5FF}
\definecolor{lightbg2}{HTML}{F5F0FF}
\definecolor{lightbg3}{HTML}{ECFDF5}
\definecolor{graytext}{HTML}{4B5563}

\hypersetup{
    colorlinks=true,
    linkcolor=accent,
    citecolor=accent2,
    urlcolor=accent
}

\usepackage{algorithm}
\usepackage{algpseudocode}
\usepackage{multirow}
\usepackage{xspace}

\newcommand{\method}{HyBrain\xspace}

\usepackage{balance}

\title{Spatiotemporal Hyperedges for \\EEG Seizure Detection and Prediction}
\author{Hyunju Kim}
\authornote{Both authors contributed equally to this research.}
\email{hyunju@gatech.edu}
\affiliation{%
  \department{School of Computational Science and Engineering}
  \institution{Georgia Institute of Technology}
  \city{Atlanta}
  \state{GA}
  \country{USA}
}

\author{Sheo Yon Jhin}
\authornotemark[1]
\email{sheoyon.jhin@kaist.ac.kr}
\affiliation{%
  \department{School of Computing}
  \institution{Korea Advanced Institute of Science and Technology}
  \city{Daejeon}
  \country{Republic of Korea}
}

\author{Noseong Park}
\email{noseong@kaist.ac.kr}
\affiliation{%
  \department{School of Computing}
  \institution{Korea Advanced Institute of Science and Technology}
  \city{Daejeon}
  \country{Republic of Korea}
}

\author{Nabil Imam}
\email{nimam6@gatech.edu}
\affiliation{%
  \department{School of Computational Science and Engineering}
  \institution{Georgia Institute of Technology}
  \city{Atlanta}
  \state{GA}
  \country{USA}
}

\renewcommand{\shortauthors}{Hyunju Kim, Sheo Yon Jhin, Noseong Park, \& Nabil Imam}

\begin{document}
\pagestyle{plain}

\begin{abstract}
Seizure detection and prediction from EEG are clinically important but challenging because seizures are rare, temporally localized, and propagate as coordinated events across multiple channels. Recent dynamic graph neural networks model this by running a temporal model over a sequence of per-time-step pairwise channel edges. However, this pairwise construction misses the spatiotemporal coupling that constitutes a seizure, at substantial training cost.
We propose \method, which summarizes spatiotemporal EEG evidence through a small set of soft hyperedges rather than pairwise edges. A per-channel Mamba backbone produces one token per (channel, second), and a spatiotemporal hyperedge block pools these tokens into $E_h$ shared group embeddings through soft memberships and broadcasts them back. The same encoder serves three downstream tasks: window-based detection, one-second point-wise detection, and preictal seizure prediction.
On TUSZ and CHB-MIT, \method achieves the best AUROC on every reported setting against ten baselines, with the largest gap on long-clip preictal prediction. It also matches the most efficient baselines in training time and peak GPU memory. A qualitative analysis shows that even a single learned hyperedge cleanly captures the preictal $\to$ ictal $\to$ postictal trajectory on a real seizure clip.
\end{abstract}

\maketitle
\thispagestyle{plain}

\section{Introduction}
Epilepsy affects approximately 50 million people worldwide, and timely recognition of seizure activity is critical for preventing injuries and life-threatening complications~\cite{devinsky2016sudden, engel2013seizures,mormann2007seizure,kuhlmann2018seizure}.
In clinical practice, seizure monitoring is performed on continuous multi-channel electroencephalography (EEG), a long and high-dimensional time series in which seizures occupy only brief intervals scattered across hours to days of recording~\cite{bergey2015long}.
Reliably surfacing these rare events from large-scale streaming biosignal data is impractical for manual review, making automated EEG seizure detection a long-standing problem at the intersection of biomedical informatics and time-series data mining~\cite{shah2018temple, shoeb2009application}.

\begin{figure}
\vspace{2em}
  \centering
  \includegraphics[width=0.98\linewidth]{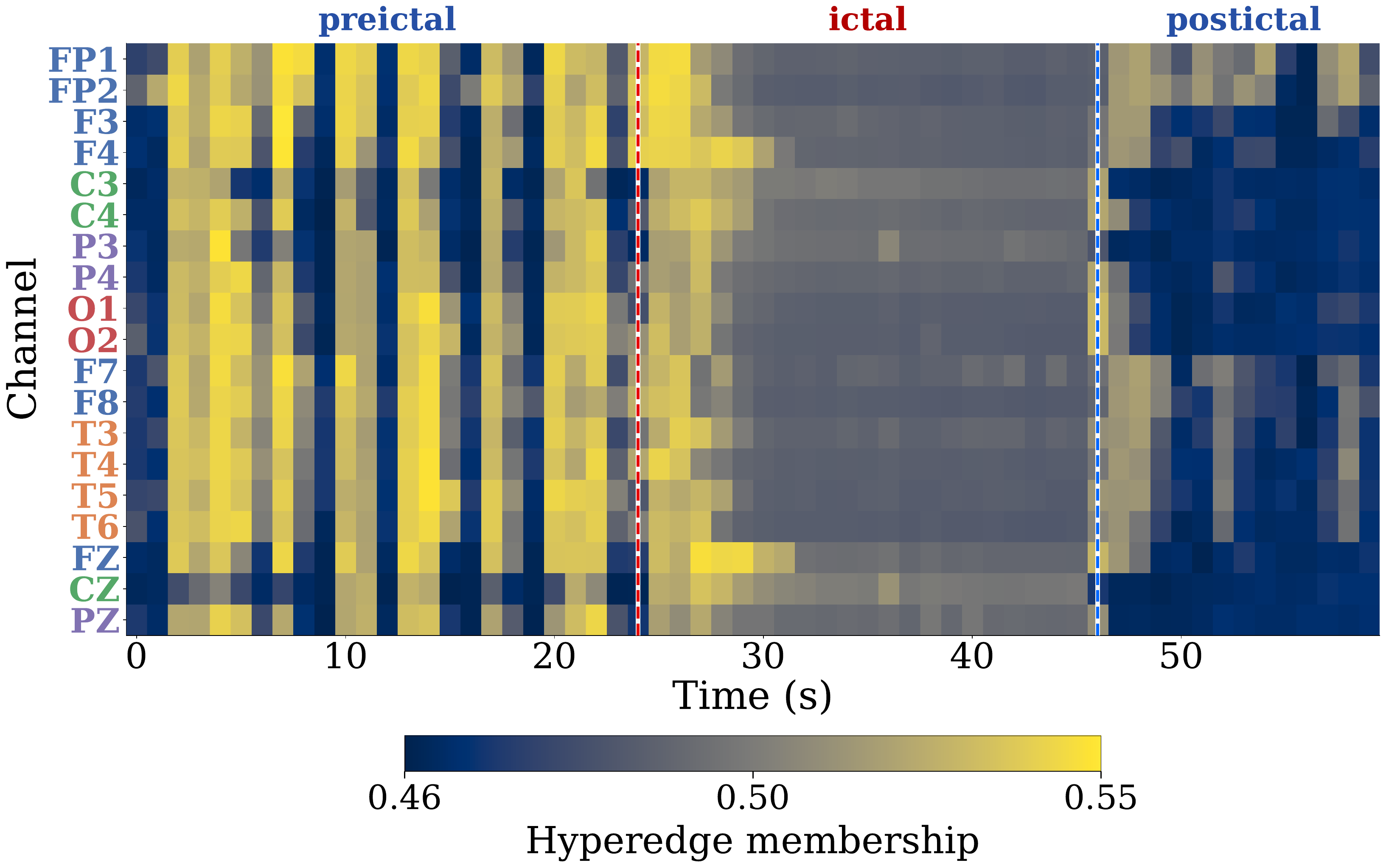}
  \caption{Membership of one spatiotemporal hyperedge over the channel-time tokens of a 60s TUSZ clip. The membership regime shifts across phases (dashed lines), coupling the tokens of each phase into a distinguishable group-level embedding. This emerges because our model learns hyperedges well-suited for seizure detection.
  }
  \Description{Heatmap of one learned hyperedge's membership over the channel-time tokens of a 60-second TUSZ clip; the membership pattern is bursty in the preictal interval, flat near the mean during the annotated seizure, and uniformly low in the postictal interval.}
  \vspace{-2em}
  \label{fig:teaser}
\end{figure}

Three families of methods have shaped EEG seizure modeling. Early models treat EEG as a multi-channel time series and apply CNNs or RNNs along the time axis only, leaving the relations among electrodes implicit~\cite{lawhern2018eegnet, ahmedt2020neural}. A second line constructs a static pairwise graph over channels and runs a GNN on top, capturing electrode relations but not how they change during a seizure~\cite{covert2019temporal, tang2021self}. To capture time-varying connectivity, recent methods introduce dynamic graph neural networks (GNNs) that replace the static adjacency with a pairwise channel graph at each time step~\cite{li2018dcrnn_traffic, pareja2020evolvegcn, tang2023modeling}. These models see both spatial and temporal variation, but each relation is still defined between two channels within the same time step. A heavier subfamily further applies a temporal model to the per-time-step edge sequence~\cite{gao2022equivalence, kotoge2026evobrain}, explicitly tracking how each pairwise edge evolves over time. However, this still does not directly model cross-time, cross-channel coupling, such as one channel at one time influencing another channel at a later time. It also decomposes a multi-channel seizure event into many bilateral edge trajectories and requires retaining edge-level activations over time,  which increases the computational and memory cost of the graph pipeline.

We address these limitations with \method, which learns a small set of spatiotemporal hyperedges over channel-time tokens. This formulation models channel-time coupling and group-level interaction directly, and does not require dense per-time-step pairwise graphs.
The encoder follows a node $\to$ hyperedge $\to$ node flow.
A per-channel Mamba backbone~\cite{gu2023mamba} first produces one token per (channel, second) pair.
A spatiotemporal hyperedge block then pools all $NT$ tokens into $E_h$ shared group embeddings through soft memberships, and broadcasts the same $E_h$ group embeddings back to every token weighted by its own memberships, so each token absorbs the group-level context relevant to it (cf. Figure~\ref{fig:teaser}).
A Set-Transformer--style Pooling by Multi-head Attention (PMA) readout~\cite{lee2019set} finally aggregates the updated tokens for each task head.
This design summarizes the entire channel-time field through $E_h\!\ll\!NT$ group embeddings rather than reconstructing it from per-time-step pairwise edges.

We evaluate \method on TUSZ~\cite{shah2018temple} and CHB-MIT~\cite{shoeb2009application} across three tasks: window-based detection, one-second point-wise detection, and preictal seizure prediction.
The same encoder supports all three tasks because \method learns task-relevant hyperedge memberships over channel-time tokens, while only the lightweight readout head is changed for each task (cf. Figure~\ref{fig:teaser}).

Our contributions are:
\begin{enumerate}
  \item We introduce learnable hyperedges over channel-time tokens that model cross-time and cross-channel coupling without pairwise decomposition.
  \item We use the same encoder for window-based detection, point-wise detection, and preictal prediction by learning task-relevant hyperedge memberships, with only the readout head changed per objective.
\item Against ten baselines, \method achieves the best AUROC across all reported detection and prediction settings on TUSZ and CHB-MIT, with the largest gain on long-clip preictal prediction, while matching the most efficient baselines in time and memory.
\end{enumerate}
Code is available at \url{https://github.com/hhyy0401/seizure}.

\label{sec:intro}

\section{Related Work}

\begin{figure*}
  \centering
  \includegraphics[width=0.96\linewidth]{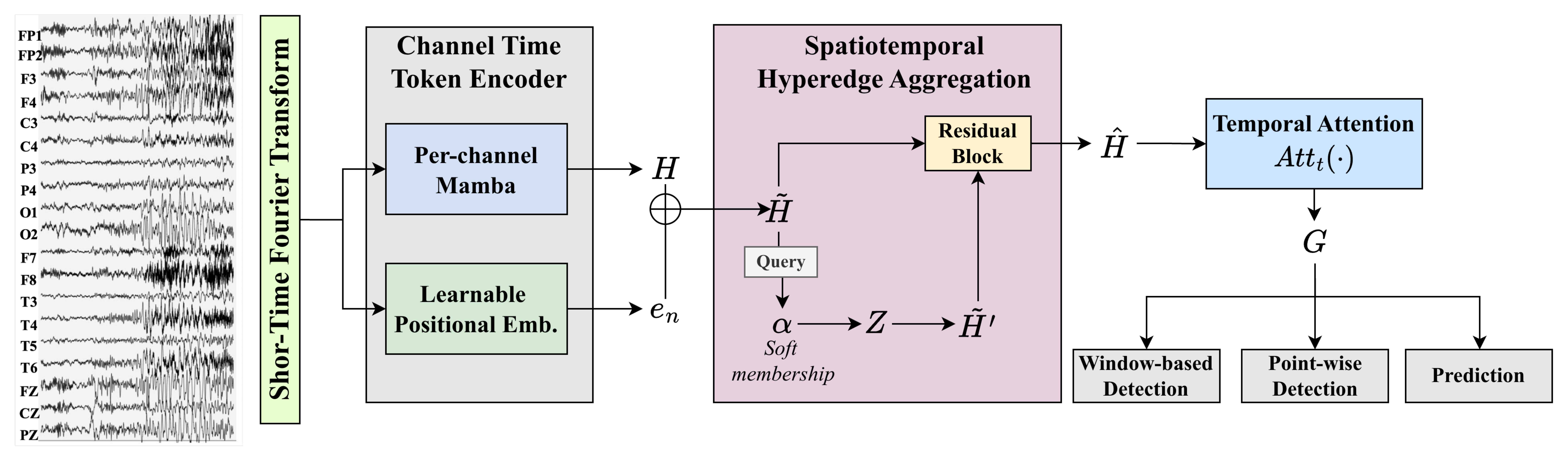}
\caption{Overview of \method. A multi-channel EEG clip is first converted into one-second channel-time spectral tokens using a per-channel short-time Fourier transform (STFT). The channel-time token encoder then processes each channel independently with a Mamba backbone and adds a learnable channel embedding to preserve channel identity. The spatiotemporal hyperedge aggregation block pools all channel-time tokens into \(E_h\) learnable hyperedge embeddings through soft memberships and broadcasts the resulting group-level context back to the tokens, capturing high-order cross-channel and cross-time interactions. A temporal attention layer further refines intra-channel temporal dependencies, producing the shared representation \(G\). Finally, task-specific heads adapt this shared representation to window-based seizure detection, point-wise seizure detection, and preictal seizure prediction.}
\Description{Block diagram of the \method encoder showing per-channel STFT features, a Mamba-based channel-time token encoder with learnable channel embeddings, spatiotemporal hyperedge aggregation with soft memberships and hyperedge embeddings, temporal attention refinement, and task-specific heads for window-based detection, point-wise detection, and seizure prediction.}
  \label{fig:overall_workflow}
\end{figure*}

\subsection{EEG Deep Learning Models}

\paragraph{Sequential and static-graph models.}
Sequential models treat EEG as a multi-channel time series and apply CNNs or RNNs mainly along the temporal axis, with EEGNet~\cite{lawhern2018eegnet} as a general-purpose backbone and CNN--LSTM~\cite{ahmedt2020neural} as an early seizure-focused variant. Static-graph approaches for EEG seizure detection make electrode topology explicit through a fixed adjacency: TGCN~\cite{covert2019temporal} introduced this idea with a hand-designed spatial graph, while self-supervised graph neural networks~\cite{tang2021self} added contrastive pretraining to alleviate label scarcity. A recent line instead pretrains large EEG foundation models on broad corpora and finetunes them on downstream clinical EEG tasks, including BIOT~\cite{yang2023biot}, LaBraM~\cite{jiang2024large}, and EEGPT~\cite{wang2024eegpt}.

\paragraph{Dynamic GNNs.}
Because a seizure unfolds as a time-varying pattern of inter-channel coordination, a dedicated line of EEG seizure detection dynamically updates the channel graph within each EEG clip. DCRNN~\cite{li2018dcrnn_traffic} pairs a diffusion-convolutional RNN with a per-clip $k$-NN correlation graph, EvolveGCN~\cite{pareja2020evolvegcn} evolves GCN parameters across time steps with a recurrent network, and GraphS4mer~\cite{tang2023modeling} relearns the graph at each within-clip segment and combines it with a structured state-space backbone~\cite{gu2022efficiently, gu2023mamba}. Recent state-of-the-art models further track temporal dynamics over edge sequences, for example with a per-edge GRU in GRU-GCN~\cite{gao2022equivalence} or a per-edge Mamba in EvoBrain~\cite{kotoge2026evobrain}. However, these methods still use pairwise edges as the basic relational unit, so higher-order coordination among multiple channels and its temporal consistency are only modeled indirectly through sequential updates or stacked layers.

\paragraph{Hypergraph learning.}
Hypergraphs extend ordinary graphs by connecting multiple nodes with one hyperedge, making them suitable for high-order group interactions. Prior hypergraph neural networks, including HGNN~\cite{feng2019hypergraph}, HyperGCN~\cite{yadati2019hypergcn}, and AllSet~\cite{chien2022you}, usually use fixed incidences from predefined groups or $k$-NN features. In EEG, STHGCN~\cite{li2023feature} applies this idea to emotion recognition by constructing separate spatial and temporal hypergraphs over EEG features. \method instead learns input-conditioned hyperedges jointly over the channel-time grid, using a small fixed hyperedge budget for each clip.

\subsection{EEG Seizure Tasks}

\paragraph{Point-wise seizure detection.}

Clinical EEG datasets such as TUSZ~\cite{shah2018temple} annotate seizure onset and offset at one-second resolution, and detectors are expected to localize these transitions, not just classify entire clips. Classical seizure-detection systems reflected this directly, combining point-wise scoring at one-second resolution with event-level scoring based on onset latency and false-alarm rate per hour~\cite{wilson2003seizure, ahammad2014detection, shoeb2009application}, and early deep-learning detectors preserved this resolution through CNN--RNN or temporal GCN backbones with per-second outputs~\cite{craley2022automated, saab2020weak}. The recently proposed SzCORE benchmark~\cite{dan2024szcore} likewise lists both point-wise and event-level scoring among its recommended evaluation metrics. Many recent dynamic-GNN detectors~\cite{tang2023modeling, kotoge2026evobrain}, by contrast, adopt coarser supervision and assign a single binary label per 12\,s or 60\,s window, leaving the available second-level annotations unused at training time. We complement this line by re-introducing point-wise detection as a first-class training objective, and report per-second results alongside the window-level ones.

\paragraph{Seizure prediction.}
Seizure prediction differs from detection in that the discriminative evidence is diffuse and distributed across channels, evolving slowly over the preictal window rather than concentrated in a clearly ictal interval. Unlike point-wise detection, prediction is usually formulated as distinguishing preictal from interictal EEG under a specified seizure prediction horizon and seizure occurrence period~\cite{winterhalder2003seizure,chen2020performance}. Early prediction studies and clinical reviews emphasized patient-specific preictal dynamics, early-warning time, and false-alarm burden as central challenges for deployable systems~\cite{mormann2007seizure,gadhoumi2016seizure,kuhlmann2018seizure}. Recent deep-learning work has further revisited this setting as an early-warning task rather than only a detection problem~\cite{truong2018convolutional,kotoge2026evobrain}. Prediction therefore provides a complementary evaluation setting to detection. Rather than recognizing ictal patterns that are already expressed, the model must infer a latent preictal state from subtle changes in ongoing background activity. This setting tests whether \method can aggregate weak evidence over long clips through learned channel--time hyperedges.

\label{sec:related}

\section{Proposed Methods}

\subsection{Problem Formulation}
\label{sec:problem}

We represent each EEG clip as a tensor $X \in \mathbb{R}^{N \times T \times F}$, where $N$ is the number of EEG channels, $T$ is the number of one-second time steps, and $F$ is the dimension of the per-second short-time Fourier transform (STFT) feature.
A clip of length $L$ seconds has $T=L$ time steps, and $x_{n,t}\in\mathbb{R}^{F}$ denotes the spectral feature of channel $n$ at second $t$.

Following standard epileptological convention, we refer to time intervals as \emph{ictal} during an annotated seizure,  \emph{preictal} immediately preceding seizure onset, \emph{postictal} immediately following seizure offset, and \emph{interictal} otherwise.

On this shared input, we study three binary tasks.
\begin{enumerate}
\item \textbf{Window-based detection} predicts $y^{\mathrm{win}}\in\{0,1\}$, indicating whether the clip overlaps with seizure interval.
\item \textbf{Point-wise detection} predicts $\mathbf{y}^{\mathrm{pt}}\in\{0,1\}^{T}$, where $y^{\mathrm{pt}}_t=1$ if the $t$-th second overlaps with a seizure.
\item \textbf{Seizure prediction} predicts $y^{\mathrm{pre}}\in\{0,1\}$, the binary classification between interictal (normal) and preictal states, where the preictal state is defined as the $\Delta_p{=}60$\,s window immediately before each seizure onset~\cite{kotoge2026evobrain}.
\end{enumerate}

All tasks use the same encoder to map $X$ into a channel-time representation $G\in\mathbb{R}^{N\times T\times d}$, and task-specific heads produce either a clip-level logit or per-second logits.

\subsection{Overall Workflow}
\label{sec:overall}

Figure~\ref{fig:overall_workflow} shows the detailed design of our method, \method. The overall workflow is as follows:
\begin{enumerate}
    \item A per-channel STFT converts each channel into a sequence of one-second spectral features.
    \item In the \textbf{Channel-time Token Encoder}, a per-channel Mamba backbone~\cite{gu2023mamba} encodes these features into one token per (channel, second) pair, augmented with a learnable channel embedding.
    \item In the \textbf{Spatiotemporal Hyperedge Aggregation} block, a small set of $E_h$ learnable hyperedges pools the channel-time tokens into group embeddings through soft memberships and broadcasts them back to every token, capturing high-order interactions across channels and time without explicit dynamic graph construction.
    \item A \textbf{Temporal Attention} layer then refines long-range temporal dependencies, producing the shared representation $G \in \mathbb{R}^{N \times T \times d}$.
    \item \textbf{Task-Specific Heads} based on PMA-style channel readouts~\cite{lee2019set} adapt $G$ to window-based detection, point-wise detection, and seizure prediction.
\end{enumerate}

\subsection{Channel-Time Token Encoding}
\label{sec:temporal-feature}

Each EEG channel is first converted into a one-second spectral sequence by a per-channel STFT, yielding $x_{n,t}\in\mathbb{R}^{F}$ per (channel, second) pair (preprocessing details in Section \ref{sec:expsetting}). We then encode temporal dynamics within each EEG channel. For each channel $n$, the spectral sequence $x_{n,1:T}$ is processed by a stack of $L_M$ Mamba layers~\cite{gu2023mamba}, each a selective state-space module that mixes information along the temporal axis only. Let $u^{(0)}_{n,t} = W_{\mathrm{in}} x_{n,t} \in \mathbb{R}^{d}$, where $W_{\mathrm{in}}\in\mathbb{R}^{d\times F}$ is a learnable linear map that lifts the spectral features to the hidden dimension $d$, and $\mathrm{LN}(\cdot)$ denotes layer normalization. At layer $l$, a latent state $s^{(l)}_{n,t}$ evolves under a selective recurrence and the token is updated by a residual layer-norm:
\begin{align}
s^{(l)}_{n,t} &= \overline{A}_{n,t} s^{(l)}_{n,t-1} + \overline{B}_{n,t} u^{(l-1)}_{n,t}, \\
u^{(l)}_{n,t} &= \mathrm{LN}\left(u^{(l-1)}_{n,t} + C_{n,t} s^{(l)}_{n,t}\right),
\end{align}
where the coefficients $(\overline{A}_{n,t}, \overline{B}_{n,t}, C_{n,t})$ depend on $u^{(l-1)}_{n,t}$ and follow Mamba's HiPPO-initialized form~\cite{gu2023mamba}. Setting $H_{n,t} = u^{(L_M)}_{n,t}$ yields the backbone output $H\in\mathbb{R}^{N\times T\times d}$. Because the recurrence is per-channel, the $N$ channels are processed independently and no cross-channel mixing happens at this stage. To preserve channel identity before hyperedge aggregation, we add a learnable channel embedding $e_n\in\mathbb{R}^{d}$ as a channel positional embedding:
\begin{equation}
\tilde{H}_{n,t} = H_{n,t} + e_n.
\end{equation}

\subsection{Spatiotemporal Hyperedge Aggregation}
\label{sec:hyperedge}

Seizure events often appear as channel-to-channel coupling that may carry a temporal lag. We therefore introduce a spatiotemporal hyperedge block that proceeds in three steps: it first learns soft token-to-hyperedge memberships, then aggregates the tokens into hyperedge embeddings, and finally updates each token by broadcasting the incident hyperedges back.

We flatten $\tilde{H}\in\mathbb{R}^{N\times T\times d}$ into $NT$ channel-time tokens $\{\tilde{h}_i\}_{i=1}^{NT}$, where each $\tilde{h}_i\in\mathbb{R}^{d}$ corresponds to one channel at one time step, and introduce $E_h$ learnable hyperedge queries $\{q_k\}_{k=1}^{E_h}\subset\mathbb{R}^{d}$, where $E_h$ is the number of latent hyperedges and each $q_k$ represents one latent spatiotemporal hyperedge. For token $i$ and hyperedge $k$, the soft membership is
\begin{equation}
\alpha_{i,k} = \sigma\left(\frac{q_k^\top \tilde{h}_i}{\sqrt{d}}\right),
\label{eq:member}
\end{equation}
where $\sigma(\cdot)$ is the sigmoid activation. We use sigmoid rather than softmax across hyperedges so that a token can belong to multiple hyperedges, supporting overlapping spatiotemporal patterns.

Given the memberships, each hyperedge embedding $z_k$ is the membership-weighted average of its assigned tokens, and each token is updated by broadcasting the incident hyperedge embeddings back with the same weights:
\begin{equation}
z_k = \frac{\sum_{i=1}^{NT} \alpha_{i,k} \tilde{h}_i}{\sum_{i=1}^{NT} \alpha_{i,k}},
\quad
\tilde{h}'_i = \sum_{k=1}^{E_h} \alpha_{i,k} z_k.
\label{eq:node}
\end{equation}
Here $z_k$ forms a group-level representation of one coordinated spatiotemporal pattern, and $\tilde{h}'_i$ encodes the group-level spatiotemporal context associated with token $i$.

We then fuse the original and updated tokens through a post-norm residual block:
\begin{equation}
\hat{h}_i = \mathrm{LN}\left(\tilde{h}_i + \mathrm{GELU}\left(W_{\mathrm{out}} \tilde{h}'_i\right)\right),
\label{eq:residual}
\end{equation}
where $W_{\mathrm{out}}\in\mathbb{R}^{d\times d}$ is a learnable linear map and $\mathrm{GELU}(\cdot)$ is the activation. Intuitively, $\tilde{h}_i$ provides the \emph{local} channel-time evidence at position $i$, while $\mathrm{GELU}(W_{\mathrm{out}} \tilde{h}'_i)$ injects the \emph{global}, hyperedge-aggregated spatiotemporal context, so that the post-norm residual incorporates the two views.

We stack $L_h$ such blocks and denote the resulting tensor by $\hat{H}^{(L_h)}\in\mathbb{R}^{N\times T\times d}$. We then add a temporal attention layer to capture residual long-range dependencies within each channel.
\begin{equation}
G_{n,:} = \mathrm{LN}\left(\hat{H}^{(L_h)}_{n,:} + \beta\,\mathrm{Att}_t\left(\hat{H}^{(L_h)}_{n,:}\right)\right),
\end{equation}
Here, $\mathrm{Att}_t(\cdot)$ denotes multi-head self-attention along the time axis with weights shared across channels, and $\beta$ is a gate coefficient. The output $G\in\mathbb{R}^{N\times T\times d}$ is used as the shared representation for downstream readout modules.

\subsection{Task-Specific Heads and Training Objective}
\label{sec:heads}

The shared representation $G$ is passed to task-specific heads. All heads use a PMA-style channel readout~\cite{lee2019set} followed by a linear classifier, where the difference is whether the temporal axis is pooled before the readout or preserved for per-second prediction.

For \textbf{window-based detection} and \textbf{seizure prediction}, we use the same clip-level readout:
\begin{equation}
r = \mathrm{PMA}\left(\mathrm{Pool}_t(G)\right), \quad
\hat{y} = f(r).
\end{equation}
Here, $\mathrm{Pool}_t(\cdot)$ denotes average pooling over time, $f(\cdot)$ is the task-specific linear classifier, and $\hat{y}$ is a raw logit.
The two tasks use different labels, $y^{\mathrm{win}}$ and $y^{\mathrm{pre}}$, and are trained separately with binary cross-entropy on the sigmoid of the logit.

For \textbf{point-wise detection}, we keep the temporal axis and apply the PMA channel readout independently at each time step:
\begin{equation}
r_t = \mathrm{PMA}\left(G_{:,t}\right), \quad
\hat{y}_t = f_{\mathrm{pt}}(r_t), \quad t=1,\dots,T,
\end{equation}
with PMA parameters shared across all time steps.
To encode the temporal contiguity of seizure labels directly into the objective, we add a small adjacent-frame logit-smoothness regularizer:
\begin{equation}
\mathcal{L}_{\mathrm{pt}} = \frac{1}{T}\sum_{t=1}^{T} \mathrm{BCE}(\hat{y}_t, y^{\mathrm{pt}}_t) + \lambda \cdot \frac{1}{T-1} \sum_{t=1}^{T-1} \left(\hat{y}_{t+1} - \hat{y}_t\right)^2.
\end{equation}
Using logit-space differences preserves informative gradients under confident predictions, while penalizing only consecutive differences allows sharp label transitions at seizure onset and offset.

\subsection{Theoretical Analysis}
\label{sec:rankk}

We show that \method's hyperedge block induces an input-adaptive global mixing operator of rank at most $E_h$ over all channel-time tokens, and compare its compute and activation-memory cost against representative edge-stream dynamic-GNN baselines.

\paragraph{Setup.}
A dynamic pairwise GNN over $N$ EEG channels and $T$ time steps maintains an adjacency $A_t \in \mathbb{R}^{N \times N}$ at every time step, with up to $E \le N^2$ pairwise edges per time step. Recent edge-stream variants further apply a temporal model along the edge sequence: GRU-GCN~\cite{gao2022equivalence} uses a per-edge GRU, and EvoBrain~\cite{kotoge2026evobrain} uses a per-edge Mamba. \method instead applies a single hyperedge block with $E_h$ shared hyperedges to the flattened $NT$ channel-time tokens.

\begin{proposition}[Rank-$E_h$ global mixing of channel-time tokens]
\label{prop:rank-k}
Fix the soft memberships $\alpha \in \mathbb{R}^{NT \times E_h}$ from
Eq.~(\ref{eq:member}), and let $s_k = \sum_{i=1}^{NT} \alpha_{i,k}$.
The aggregate-and-broadcast step in Eq.~(\ref{eq:node}) acts on the
channel-time tokens as
\[
\tilde{h}'_i = \sum_{j=1}^{NT} M_{ij}\, \tilde{h}_j,
\qquad
M_{ij} = \sum_{k=1}^{E_h} \frac{\alpha_{i,k}\, \alpha_{j,k}}{s_k}.
\]
The induced operator $M = A D^{-1} A^\top$, with $A = [\alpha_{i,k}]
\in \mathbb{R}^{NT \times E_h}$ and $D = \mathrm{diag}(s_1, \ldots, s_{E_h})$,
satisfies two properties:
\begin{enumerate}
    \item \textbf{Global support.} Under sigmoid memberships,
    $\alpha_{i,k} > 0$ for all $i, k$, so $M_{ij} > 0$ for every pair
    of channel-time tokens.
    \item \textbf{Low-rank global interaction.}
    $M$ factors through the $E_h$
    hyperedges via $A$, so $\mathrm{rank}(M) \le E_h$.
\end{enumerate}
\end{proposition}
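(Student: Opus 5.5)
The plan is to substitute the definition of $z_k$ from Eq.~(\ref{eq:node}) into the broadcast formula and exchange the order of summation. Writing $z_k = s_k^{-1}\sum_{j}\alpha_{j,k}\tilde{h}_j$, we get
\[
\tilde{h}'_i=\sum_{k=1}^{E_h}\alpha_{i,k}\,\frac{1}{s_k}\sum_{j=1}^{NT}\alpha_{j,k}\tilde{h}_j=\sum_{j=1}^{NT}\Bigl(\sum_{k=1}^{E_h}\frac{\alpha_{i,k}\alpha_{j,k}}{s_k}\Bigr)\tilde{h}_j .
\]
This identifies $M_{ij}$. Reading the inner sum entrywise as $(A D^{-1}A^\top)_{ij}$ gives the matrix form. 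In stacked notation this is $\tilde{H}'=M\tilde{H}$, where $\tilde{H}\in\mathbb{R}^{NT\times d}$ collects the flattened tokens. One side condition is needed: $D$ must be invertible, i.e.\ $s_k>0$. This follows from the positivity argument below.

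For global support, use that the sigmoid maps $\mathbb{R}$ into $(0,1)$. Hence every $\alpha_{i,k}=\sigma(q_k^\top\tilde{h}_i/\sqrt{d})$ is strictly positive, whatever the value of the finite argument. Then $s_k$ is a finite sum of positive terms and is positive, so $D^{-1}$ exists and has positive diagonal. Each summand $\alpha_{i,k}\alpha_{j,k}/s_k$ is therefore positive, and a nonempty sum over $k$ (since $E_h\ge 1$) gives $M_{ij}>0$ for all $i,j$.

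For the rank bound, write $M=(AD^{-1})A^\top$. Then $\mathrm{rank}(M)\le\mathrm{rank}(A^\top)\le\min(NT,E_h)\le E_h$ by submultiplicativity of rank. Equivalently, the column space of $M$ lies in the column space of $A$, which is spanned by $E_h$ vectors.

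No step is a real obstacle. The one point needing care is that the operator is linear only because $\alpha$ is held fixed, as the statement stipulates. In the actual block, $\alpha$ depends on $\tilde{h}$, so $M$ is input-adaptive rather than a fixed linear map. The proof should therefore state explicitly that the linearity and rank claims concern $M$ at the given memberships.
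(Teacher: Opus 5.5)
Your proposal is correct and follows essentially the same argument as the paper: substitute $z_k$ into the broadcast, swap the sums to read off $M = AD^{-1}A^\top$, bound the rank by the inner dimension $E_h$, and use the strict positivity of the sigmoid for global support. Your extra remarks are small but useful refinements that the paper leaves implicit: that $s_k>0$ makes $D$ invertible, and that linearity holds only for fixed $\alpha$.
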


\begin{proof}
Substituting the hyperedge embedding into the broadcast update gives
\[
\tilde h'_i
=
\sum_{k=1}^{E_h}\alpha_{i,k}
\sum_{j=1}^{NT}\frac{\alpha_{j,k}}{s_k}\tilde h_j
=
\sum_{j=1}^{NT}
\left(
\sum_{k=1}^{E_h}\frac{\alpha_{i,k}\alpha_{j,k}}{s_k}
\right)\tilde h_j .
\]
Thus $M=AD^{-1}A^\top$, where $A=[\alpha_{i,k}]\in\mathbb{R}^{NT\times E_h}$ and $D=\mathrm{diag}(s_1,\ldots,s_{E_h})$. Therefore $\mathrm{rank}(M)\le E_h$. Since sigmoid memberships are strictly positive for finite inputs, $M_{ij}>0$ for all $i,j$. 
\end{proof}

Proposition~\ref{prop:rank-k} shows that the hyperedge block couples every pair of channel-time tokens in one step, while routing the interaction through only $E_h$ shared hyperedges. Table~\ref{tab:cost-analysis} compares the resulting per-sample asymptotic cost against the two edge-stream variants used by the strongest baselines. Compared with a Mamba-style edge stream, the edge-stream-to-hyperedge compute ratio scales as $\mathcal{O}(E/(NE_h))$, so the hyperedge block is cheaper whenever the number of pairwise edges per time step exceeds the effective hyperedge budget $NE_h$. For dense pairwise graphs with $E=\mathcal{O}(N^2)$, this yields a compute improvement of $\mathcal{O}(N/E_h)$ and an edge-related activation-memory reduction of approximately $\mathcal{O}(Nd/E_h)$ when the membership tensor $TNE_h$ dominates the $E_hd$ hyperedge state. For sparse graphs with $E=cN$, where $c$ is the average number of pairwise edges per node, the edge-stream-to-hyperedge compute ratio becomes $c/E_h$, and the corresponding relation-related activation-memory ratio is approximately $cd/E_h$ under the same dominance assumption.
\begin{table}[t]
\centering
\caption{Per-sample asymptotic cost of representative spatiotemporal interaction blocks.}
\label{tab:cost-analysis}
\small
\begin{tabular}{lcc}
\toprule
Method & Compute & Activation memory \\
\midrule
GRU-style edge stream~\cite{gao2022equivalence} & $\mathcal{O}(TEd^2)$ & $\mathcal{O}(TEd)$ \\
Mamba-style edge stream~\cite{kotoge2026evobrain} & $\mathcal{O}(TEd)$ & $\mathcal{O}(TEd)$ \\
\method hyperedge block & $\mathcal{O}(TNE_hd)$ & $\mathcal{O}(TNE_h + E_hd)$ \\
\bottomrule
\end{tabular}
\end{table}

Empirically, these claims are checked in three places. The effectiveness of the hyperedge coupling itself is tested in Section~\ref{sec:ablation_block}. The memory and runtime advantage is evaluated in Section~\ref{sec:effective}. The sufficiency of a small $E_h$ is verified by the hyperedge sensitivity study in Section~\ref{sec:eh_sensitivity}.

\label{sec:methods}

\section{Experiments}
\subsection{Experimental Settings}
\label{sec:expsetting}

\paragraph{Datasets and preprocessing.}
We evaluate on TUSZ~\cite{shah2018temple} and CHB-MIT~\cite{shoeb2009application}, using 12\,s and 60\,s clip settings for window-based experiments and following the data splits and evaluation protocol of EvoBrain~\cite{kotoge2026evobrain}.
For the input feature, raw EEG is resampled to 200\,Hz, segmented into non-overlapping 1-second windows per channel, and each window is mapped to its log-magnitude STFT over the $F{=}100$ positive frequency bins.
An $L$-second clip therefore yields $X\in\mathbb{R}^{N\times L\times 100}$.
For each subject, per-channel STFT mean and standard deviation are computed on the training split and used to normalize all splits. 

For point-wise detection, we derive a per-second label $y^{\text{pt}}\in\{0,1\}^{L}$ for each clip, where $y^{\text{pt}}_t{=}1$ iff the $t$-th 1-second window overlaps any annotated seizure interval. The window-level label is recovered as $y^{\text{win}}=\max_t y^{\text{pt}}_t$.

\begin{table*}[t]
\centering
\caption{Window-based seizure detection performance on TUSZ and CHB-MIT. Mean $\pm$ std across three seeds. The best result is highlighted in \textbf{bold}, and the second-best is \underline{underlined}. Time denotes wall-clock seconds per training epoch on a single A6000, and Memory denotes peak GPU memory during training, reported in MB.}
\vspace{-1em}
\label{tab:tusz_main}
\small
\setlength{\tabcolsep}{4pt}
\renewcommand{\arraystretch}{1.05}
\begin{tabular}{lcccccccc}
\toprule
\multirow{2}{*}{Method} & \multirow{2}{*}{Time (s/ep.)} & \multirow{2}{*}{Memory (MB)}
 & \multicolumn{2}{c}{TUSZ (12\,s)} & \multicolumn{2}{c}{TUSZ (60\,s)}
 & \multicolumn{2}{c}{CHB-MIT (12\,s)} \\
\cmidrule(lr){4-5} \cmidrule(lr){6-7} \cmidrule(lr){8-9} 
& & & AUROC & F1 & AUROC & F1 & AUROC & F1 \\
\midrule
LSTM        & 31.30 & 88.0 & 0.839 $\pm$ 0.016 & 0.403 $\pm$ 0.065 & 0.826 $\pm$ 0.050 & 0.427 $\pm$ 0.067 & 0.823 $\pm$ 0.075 & 0.064 $\pm$ 0.007 \\
CNN-LSTM    & 29.90 & 1208.1& 0.824 $\pm$ 0.014 & 0.365 $\pm$ 0.053 & 0.690 $\pm$ 0.026 & 0.242 $\pm$ 0.020 & 0.798 $\pm$ 0.110 & 0.049 $\pm$ 0.032 \\
BIOT        & 86.07 & 1850.7 & 0.811 $\pm$ 0.006 & 0.301 $\pm$ 0.013 & 0.717 $\pm$ 0.060 & 0.292 $\pm$ 0.036 & 0.903 $\pm$ 0.018 & 0.054 $\pm$ 0.090 \\
LaBraM      & 212.9 & 3576.7 & 0.863 $\pm$ 0.010 & 0.398 $\pm$ 0.034 & 0.865 $\pm$ 0.007 & 0.466 $\pm$ 0.016 & 0.787 $\pm$ 0.010 & 0.054 $\pm$ 0.022 \\
EEGPT       & 151.2 & 1022.1 & 0.884 $\pm$ 0.004 & 0.438 $\pm$ 0.010 & 0.784 $\pm$ 0.009 & 0.343 $\pm$ 0.007 & 0.911 $\pm$ 0.016 & \textbf{0.266 $\pm$ 0.025} \\
EvolveGCN   & 72.89 & \textbf{62.3} & 0.812 $\pm$ 0.006 & 0.356 $\pm$ 0.018 & 0.752 $\pm$ 0.010 & 0.333 $\pm$ 0.025 & 0.810 $\pm$ 0.015 & 0.062 $\pm$ 0.013 \\
DCRNN       & 240.2 & \underline{71.2} & 0.896 $\pm$ 0.003 & 0.513 $\pm$ 0.006 & 0.895 $\pm$ 0.011 & 0.574 $\pm$ 0.019 & 0.877 $\pm$ 0.006 & 0.125 $\pm$ 0.018 \\
GraphS4mer  & 30.29 & 340.4 & 0.884 $\pm$ 0.002 & 0.461 $\pm$ 0.001 & 0.856 $\pm$ 0.017 & 0.474 $\pm$ 0.046 & 0.899 $\pm$ 0.015 & 0.185 $\pm$ 0.046 \\
GRU-GCN     & 31.13 & 4605.5 & 0.894 $\pm$ 0.005 & 0.527 $\pm$ 0.059 & \underline{0.907 $\pm$ 0.001} & \textbf{0.640 $\pm$ 0.012} & 0.907 $\pm$ 0.005 & 0.171 $\pm$ 0.025 \\
EvoBrain    & 57.99 & 3981.7 & \underline{0.897 $\pm$ 0.005} & \textbf{0.558 $\pm$ 0.020} & 0.888 $\pm$ 0.018 & 0.583 $\pm$ 0.038 & \underline{0.917 $\pm$ 0.005} & \underline{0.232 $\pm$ 0.041} \\
\midrule
Ours ($E_h{=}1$) & \textbf{29.10} & 332.2 & \textbf{0.899 $\pm$ 0.006} & \underline{0.551 $\pm$ 0.008} & 0.885 $\pm$ 0.010 & 0.564 $\pm$ 0.041 & \underline{0.917 $\pm$ 0.005} & 0.182 $\pm$ 0.033 \\
Ours ($E_h{=}2$) & \underline{29.70} & 332.5 & 0.894 $\pm$ 0.005 & 0.541 $\pm$ 0.025 & 0.892 $\pm$ 0.002 & 0.537 $\pm$ 0.041 & 0.907 $\pm$ 0.009 & 0.164 $\pm$ 0.026 \\
Ours ($E_h{=}3$) & 30.32 & 332.8 & 0.880 $\pm$ 0.006 & 0.540 $\pm$ 0.002 & \textbf{0.908 $\pm$ 0.006} & \underline{0.636 $\pm$ 0.023} & \textbf{0.924 $\pm$ 0.003} & 0.191 $\pm$ 0.009 \\
\bottomrule
\vspace{-2em}
\end{tabular}
\end{table*}

\paragraph{Tasks.}
We evaluate three EEG seizure-analysis tasks: window-based seizure detection, point-wise seizure detection, and preictal seizure prediction.
In window-based detection, each $W$-second clip ($W \in \{12,60\}$) is labeled positive if it overlaps with any annotated seizure interval.
In point-wise detection, each one-second frame is labeled using seizure onset and offset annotations, so models are evaluated at one-second temporal resolution.
In seizure prediction, clips are classified as preictal or interictal. Preictal clips are drawn from the $\Delta_p{=}60$\,s window before seizure onset, and the interictal pool is constructed by discarding all seizure-labeled segments and excluding a $\Delta_b{=}300$\,s buffer around every seizure boundary so that preictal evidence does not contaminate the negative class. 
Following EvoBrain~\cite{kotoge2026evobrain}, we report prediction on TUSZ only, since CHB-MIT's smaller scale and highly patient-specific pediatric preictal signal are insufficient for cross-patient prediction.

\paragraph{Prediction sampling.}
To match the fixed clip grid used by all models, we accept a clip as preictal iff its full span lies within the strict $\Delta_p{=}60$\,s preictal window for 12\,s clips, and within a $4\times$ clip-length search window before the buffer for 60\,s clips.
For tractability and to keep F1 meaningful, we subsample interictal clips per patient to at most $5\times$ that patient's number of preictal clips, using a fixed seed and applying the same procedure to train, dev, and test splits.

\paragraph{Metrics.}
We use AUROC and F1 score as primary metrics, following EvoBrain~\cite{kotoge2026evobrain}.
Quantitative results are averaged over three random seeds.
We report test AUROC at the checkpoint with the highest dev AUROC, and test F1 at the threshold $\tau^{*}$ that maximizes dev F1.

\paragraph{Hyperparameters.}
We select hyperparameters by dev AUROC. For \method, we grid search the learning rate, weight decay, dropout, hidden dimension $d$, and the temporal-attention gate $\beta \in \{0, 1\}$. The search space, the selected per-task values, and the shared optimization settings are listed in Appendix~\ref{app:params}. Unless otherwise specified, baseline hyperparameters follow their original implementations or the EvoBrain protocol.

\paragraph{Training.}
All models follow the EvoBrain~\cite{kotoge2026evobrain} training setting, STFT preprocessing, 1:1-balanced training sampler, and the window-based runs. For point-wise detection, all baselines are adapted with a shared per-timestep readout head so that they emit one logit per second.

\begin{table*}[t]
\centering
\caption{Point-wise (per-second) seizure detection on TUSZ (12\,s, 60\,s) and CHB-MIT (12\,s). All baselines are paper architectures adapted with a per-timestep readout head, trained with dense BCE on per-second labels.}
\vspace{-1em}
\label{tab:pointwise}
\small
\setlength{\tabcolsep}{4pt}
\renewcommand{\arraystretch}{1.05}
\begin{tabular}{lcccccccc}
\toprule
\multirow{2}{*}{Method} & \multirow{2}{*}{Time (s/ep.)} & \multirow{2}{*}{Memory (MB)}
 & \multicolumn{2}{c}{TUSZ (12\,s)} & \multicolumn{2}{c}{TUSZ (60\,s)}
 & \multicolumn{2}{c}{CHB-MIT (12\,s)} \\
\cmidrule(lr){4-5} \cmidrule(lr){6-7} \cmidrule(lr){8-9} 
& & & AUROC & F1 & AUROC & F1 & AUROC & F1 \\
\midrule
Dense-LSTM                  & \textbf{28.5} & \textbf{415} & 0.869 $\pm$ 0.007              & 0.369 $\pm$ 0.031              & 0.863 $\pm$ 0.012 & 0.388 $\pm$ 0.028 & 0.857 $\pm$ 0.010             & 0.069 $\pm$ 0.016 \\
Dense-CNN-LSTM              & \underline{29.5} & 1804 & 0.873 $\pm$ 0.013             & 0.378 $\pm$ 0.044              & 0.887 $\pm$ 0.005 & 0.410 $\pm$ 0.002 & 0.914 $\pm$ 0.010             & 0.213 $\pm$ 0.026 \\
Dense-BIOT                  & 111.5 & 3889 & 0.858 $\pm$ 0.013            & 0.348 $\pm$ 0.027              & 0.834 $\pm$ 0.036 & 0.315 $\pm$ 0.041 & 0.841 $\pm$ 0.007             & 0.108 $\pm$ 0.016 \\
Dense-GRU-GCN               & 30.0 & 5419 & 0.910 $\pm$ 0.003             & 0.517 $\pm$ 0.016              & 0.918 $\pm$ 0.007 & 0.554 $\pm$ 0.046 & 0.859 $\pm$ 0.003             & 0.086 $\pm$ 0.014\\
Dense-DCRNN                 & 937.0 & 886 & 0.910 $\pm$ 0.004             & 0.499 $\pm$ 0.017              & 	0.922 $\pm$ 0.004 & 0.486 $\pm$ 0.030 & 0.886 $\pm$ 0.020             & 0.148 $\pm$ 0.047 \\
Dense-EvoBrain              & 69.0 & 3515 & 0.918 $\pm$ 0.001 & \textbf{0.575 $\pm$ 0.011}     & 0.916 $\pm$ 0.002 & \underline{0.558 $\pm$ 0.003} & 0.887 $\pm$ 0.004             & 0.141 $\pm$ 0.025 \\ \midrule
\textbf{Ours}               & 33.0 & \underline{521} & 0.913 $\pm$ 0.005              & 0.491 $\pm$ 0.029              & 0.918 $\pm$ 0.007 & 0.512 $\pm$ 0.020 & 0.910 $\pm$ 0.003             & 0.228 $\pm$ 0.017 \\
\quad + ($\lambda{=}0.1$)   & 33.5 & \underline{521} & 0.920 $\pm$ 0.006              & 0.488 $\pm$ 0.040              & 0.923 $\pm$ 0.009 & 0.504 $\pm$ 0.048 & \underline{0.914 $\pm$ 0.006}    & 0.245 $\pm$ 0.007 \\
\quad + ($\lambda{=}0.3$)   & 33.0 & \underline{521} & \textbf{0.926 $\pm$ 0.004}     & \underline{0.548 $\pm$ 0.042}  & \textbf{0.938 $\pm$ 0.004} & \textbf{0.575 $\pm$ 0.049} & \textbf{0.929 $\pm$ 0.006} & \underline{0.232 $\pm$ 0.015} \\
\quad + ($\lambda{=}1.0$)   & 32.0 & \underline{521} & \underline{0.924 $\pm$ 0.006}              & 0.499 $\pm$ 0.055              & \underline{0.933 $\pm$ 0.004} & 0.555 $\pm$ 0.029 & 0.909 $\pm$ 0.008 & \textbf{0.247 $\pm$ 0.009} \\
\bottomrule
\end{tabular}
\vspace{-1em}
\end{table*}

\paragraph{Baselines.}
We compare against ten baselines covering the three families reviewed in Section~\ref{sec:related}.
  \begin{enumerate}
    \item \textit{Sequential models.} LSTM~\cite{hochreiter1997long} (2 layers, hidden 64) and CNN--LSTM~\cite{ahmedt2020neural} (2D conv-pool over channel$\times$frequency, 2-layer LSTM).
    \item \textit{EEG foundation models.} BIOT~\cite{yang2023biot}, LaBraM~\cite{jiang2024large}, and EEGPT~\cite{wang2024eegpt}, each finetuned from the released checkpoint with a linear classification head. LaBraM and EEGPT take raw EEG, BIOT uses its native spectrogram patches.
    \item \textit{Dynamic GNNs.} EvolveGCN~\cite{pareja2020evolvegcn}, DCRNN~\cite{li2018dcrnn_traffic} ($k{=}3$ $k$-NN correlation graph), GraphS4mer~\cite{tang2023modeling}, GRU-GCN~\cite{gao2022equivalence}, and EvoBrain~\cite{kotoge2026evobrain}.
  \end{enumerate}
\paragraph{Point-wise baselines.}
Existing seizure detection baselines (EvoBrain, DCRNN, GRU-GCN, LSTM, CNN-LSTM, BIOT) are originally designed for window-level classification, i.e., they emit a single binary prediction per $T$-second clip. To enable a fair comparison on the point-wise detection task, we adapt each baseline by replacing its terminal readout, typically a last-timestep aggregation followed by a fully connected classifier, with a time-shared per-timestep prediction head. Concretely, the backbone is kept intact and produces a hidden tensor of shape $(B, T, N, d)$, after which the same linear projection and node aggregation are applied independently at every timestep, yielding $(B, T, 1)$ logits. We denote these point-wise variants as Dense-EvoBrain, Dense-DCRNN, etc. This minimal modification preserves every learnable component of the original architecture and isolates the contribution of the dense temporal supervision from broader architectural changes.

\paragraph{Environments.}
All experiments are conducted on a workstation equipped with NVIDIA RTX~A6000 GPUs (48GB VRAM); every reported timing is measured on a single A6000. Our implementation is in PyTorch~2.5.1 with CUDA~12.1, Python~3.10, on Ubuntu~20.04 LTS.

\subsection{Seizure Detection and Prediction}
\label{sec:detection}

\begin{table}[t]
\centering
\caption{Seizure prediction (preictal vs.\ interictal binary classification) on TUSZ.}
\vspace{-1em}
\label{tab:prediction}
\small
\setlength{\tabcolsep}{4pt}
\renewcommand{\arraystretch}{1.05}
\resizebox{\columnwidth}{!}{%
\begin{tabular}{lcccc}
\toprule
\multirow{2}{*}{Method}
 & \multicolumn{2}{c}{12\,s}
 & \multicolumn{2}{c}{60\,s} \\
\cmidrule(lr){2-3} \cmidrule(lr){4-5}
 & AUROC & F1 & AUROC & F1 \\
\midrule
LSTM           & 0.488 $\pm$ 0.003 & 0.250 $\pm$ 0.019 & 0.659 $\pm$ 0.082 & 0.295 $\pm$ 0.019 \\
CNN-LSTM       & 0.418 $\pm$ 0.009 & 0.244 $\pm$ 0.003 & 0.641 $\pm$ 0.019 & 0.357 $\pm$ 0.013 \\
BIOT           & 0.497 $\pm$ 0.010 & 0.260 $\pm$ 0.027 & 0.613 $\pm$ 0.066 & 0.343 $\pm$ 0.057 \\
LaBraM         & 0.464 $\pm$ 0.008 & 0.272 $\pm$ 0.010 & 0.509 $\pm$ 0.050 & 0.255 $\pm$ 0.045 \\
EEGPT          & 0.457 $\pm$ 0.038 & 0.250 $\pm$ 0.062 & 0.640 $\pm$ 0.063 & 0.315 $\pm$ 0.056 \\
EvolveGCN      & 0.450 $\pm$ 0.012 & 0.224 $\pm$ 0.021 & 0.697 $\pm$ 0.013 & 0.304 $\pm$ 0.035 \\
DCRNN          & 0.491 $\pm$ 0.039 & 0.260 $\pm$ 0.015 & 0.606 $\pm$ 0.021 & 0.270 $\pm$ 0.028 \\
GraphS4mer     & 0.543 $\pm$ 0.019 & 0.283 $\pm$ 0.026 & 0.641 $\pm$ 0.147 & 0.336 $\pm$ 0.108 \\
GRU-GCN        & 0.457 $\pm$ 0.012 & 0.243 $\pm$ 0.013 & 0.621 $\pm$ 0.019 & 0.308 $\pm$ 0.028 \\
EvoBrain       & 0.476 $\pm$ 0.000 & 0.233 $\pm$ 0.009 & 0.623 $\pm$ 0.039 & 0.325 $\pm$ 0.044 \\
\midrule
\textbf{Ours} ($E_h{=}1$) & \underline{0.519 $\pm$ 0.006} & 0.251 $\pm$ 0.012 & \textbf{0.758 $\pm$ 0.028} & \textbf{0.434 $\pm$ 0.037} \\
\textbf{Ours} ($E_h{=}2$) & \textbf{0.560 $\pm$ 0.031} & \textbf{0.278 $\pm$ 0.011} & \underline{0.751 $\pm$ 0.054} & \underline{0.359 $\pm$ 0.057} \\
\textbf{Ours} ($E_h{=}3$) & 0.521 $\pm$ 0.065 & \underline{0.272 $\pm$ 0.034} & 0.642 $\pm$ 0.094 & 0.321 $\pm$ 0.063 \\
\bottomrule
\end{tabular}%
}\vspace{-1.5em}
\end{table}

\paragraph{Window-based Detection}
\label{sec:window_detection}
Window-based seizure detection is the standard benchmark setting for recent EEG seizure models, where each fixed-length EEG clip is assigned a single seizure/non-seizure label.
As shown in Table~\ref{tab:tusz_main}, \method attains the highest AUROC on all three benchmarks (TUSZ 12\,s, TUSZ 60\,s, and CHB-MIT 12\,s), and its F1 is comparable to the top method across settings (e.g., within 0.01 of EvoBrain at TUSZ 12\,s and GRU-GCN at TUSZ 60\,s). Against these strongest dynamic-GNN baselines, \method matches GRU-GCN's training time (29.1 vs.\ 31.1 s/epoch), runs about $2\times$ faster than EvoBrain (29.1 vs.\ 58.0 s/epoch), and uses roughly an order of magnitude less peak GPU memory than either baseline. Detailed runtime and memory measurements are reported in Table~\ref{tab:efficiency}, and the accuracy--efficiency trade-off is visualized in Figure~\ref{fig:pareto}.

\paragraph{Point-wise Detection}
\label{sec:pointwise_detection}

The point-wise (per-second) setting is the stricter benchmark, since the model must localize seizure activity within a clip rather than emit a single clip-level label.
Without smoothness regularization, \method is already the strongest or second-strongest on AUROC across all three benchmarks (Table~\ref{tab:pointwise}).
Adding the temporal smoothness penalty further improves performance. Specifically, \method achieves the highest AUROC on every benchmark and the highest F1 on CHB-MIT. Dense-EvoBrain remains marginally ahead only on TUSZ 12 s F1.

Three architectural choices explain \method's effectiveness on per-second prediction. First, the per-channel Mamba backbone preserves the time axis end-to-end, so per-second logits are read off a per-time-step representation rather than reconstructed from a clip-level summary at the readout. Second, the spatiotemporal hyperedge block concentrates cross-channel seizure evidence into a small set of group embeddings, so each per-second decision draws on a focused, coordinated context instead of averaging over every channel.Third, temporal attention further refines the per-second logits by enforcing consistency along the time axis.

\paragraph{Seizure Prediction}
\label{sec:seizure_prediction}

Seizure prediction, which aims to forewarn an oncoming seizure from the preictal EEG window for abortive medication or closed-loop stimulation~\cite{mormann2007seizure, kuhlmann2018seizure}, is the hardest of the three tasks. As shown in Table~\ref{tab:prediction}, most baselines remain near chance on TUSZ 12\,s and reach only the low-to-mid AUROC range on TUSZ 60\,s. On TUSZ 60\,s, \method ($E_h{=}1$) extends substantial leads on both AUROC and F1 over every baseline, including dynamic-GNN methods that match it on detection. On TUSZ 12\,s, it still attains the highest AUROC, with F1 comparable to the best baseline.

The preictal signal is weak, distributed across channels, and changes gradually over the 60\,s interval before onset. In pairwise dynamic graphs, this pattern is represented through many edge-level trajectories, which can make the relevant preictal structure difficult to distinguish from interictal variability. A spatiotemporal hyperedge instead pools the full 60\,s window across both channel and time into a single group embedding, so the discriminative signal is collected in one place rather than reassembled from pieces. The 12\,s setting captures only a fraction of this window and serves as a short-context stress test rather than the primary clinical operating window, which is why absolute numbers stay low across all methods even though \method still leads.

\label{sec:exp}

\section{Additional Analyses}

\begin{table}[t]
\centering
\caption{
Hyperedge block ablation on TUSZ window-based detection. $\Delta$ rows are changes versus the best \method configuration per column.
}
\vspace{-1em}
\label{tab:hyperblock_ablation}
\small
\setlength{\tabcolsep}{4pt}
\renewcommand{\arraystretch}{1.05}
\resizebox{\columnwidth}{!}{%
\begin{tabular}{lcccc}
\toprule
\multirow{2}{*}{Method}
 & \multicolumn{2}{c}{TUSZ 12\,s}
 & \multicolumn{2}{c}{TUSZ 60\,s} \\
\cmidrule(lr){2-3} \cmidrule(lr){4-5}
 & AUROC & F1 & AUROC & F1 \\
\midrule
\method ($E_h{=}1$) & \textbf{0.899 $\pm$ 0.006} & \textbf{0.551 $\pm$ 0.008} & 0.885 $\pm$ 0.010 & 0.564 $\pm$ 0.041 \\
\method ($E_h{=}2$) & 0.894 $\pm$ 0.005 & 0.541 $\pm$ 0.025 & 0.892 $\pm$ 0.002 & 0.537 $\pm$ 0.041 \\
\method ($E_h{=}3$) & 0.880 $\pm$ 0.006 & 0.540 $\pm$ 0.002 & \textbf{0.908 $\pm$ 0.006} & \textbf{0.636 $\pm$ 0.023} \\
\midrule
w/o hyperblock      & 0.888 $\pm$ 0.005 & 0.500 $\pm$ 0.028 & 0.866 $\pm$ 0.016 & 0.544 $\pm$ 0.038 \\
$\Delta$ (vs.\ best) & $-1.1$\,pt & $-5.1$\,pt & $-4.2$\,pt & $-9.2$\,pt \\
\midrule
w/o Mamba & 0.900 $\pm$ 0.003 & 0.540 $\pm$ 0.020 & 0.888 $\pm$ 0.021 & 0.609 $\pm$ 0.054 \\
$\Delta$ (vs.\ best) & $+0.1$\,pt & $-1.1$\,pt & $-2.0$\,pt & $-2.7$\,pt \\
\bottomrule
\end{tabular}%
}\vspace{-1.5em}
\end{table}

We analyze \method along four axes.
We first isolate the contribution of the spatiotemporal hyperedge block (Section~\ref{sec:ablation_block}) and quantify its accuracy--efficiency trade-off against baselines (Section~\ref{sec:effective}). We then visualize what the hyperedges learn on a representative seizure clip (Section~\ref{sec:visualization}) and study its sensitivity to the number of hyperedges $E_h$ (Section~\ref{sec:eh_sensitivity}).

\subsection{Ablation Study}
\label{sec:ablation_block}

To isolate the contribution of the spatiotemporal hyperedge aggregation block, we remove Eqs.~\eqref{eq:member}--\eqref{eq:residual} of Section~\ref{sec:hyperedge} (soft membership $\alpha_{i,k}$, hyperedge aggregation $z_k$ and broadcast $\tilde h'_i$, and the post-residual mixing), keeping the rest of the encoder intact. ``w/o Mamba'' replaces the per-channel Mamba encoder with a GRU in the per-task best configuration ($E_h=1$ for 12\,s, $E_h=3$ for 60\,s), keeping all else unchanged.

Table~\ref{tab:hyperblock_ablation} reports the effect on TUSZ window-based detection. Removing the block drops $1.1$\,pt AUROC and $5.1$\,pt F1 on TUSZ 12\,s, and $4.2$\,pt AUROC and $9.2$\,pt F1 on TUSZ 60\,s, against the best \method configuration on each task. The cost is larger on the 60\,s setting, where the longer clip exposes more cross-channel and cross-time structure for the block to model. 
Swapping the backbone costs less than removing the hyperedge block (Table~\ref{tab:hyperblock_ablation}): the hyperedge aggregation, not the backbone, is the critical component.
This supports the efficiency-effectiveness argument in Section~\ref{sec:rankk}: its value scales with the channel-time grid, and the PMA readout alone cannot replace it.

\subsection{Effectiveness of \method}
\label{sec:effective}

Table~\ref{tab:efficiency} and Figure~\ref{fig:pareto} examine the accuracy-efficiency trade-off on TUSZ 60\,s detection, where \method achieves state-of-the-art performance.
Excluding LSTM as a basic recurrent baseline, only GraphS4mer matches \method's efficiency on both axes ($1.00\times$ time, $1.02\times$ memory, relative to \method ($E_h$=3)). Every other baseline is substantially worse than \method on at least one of the two axes.
The largest gaps appear in the edge-stream dynamic-GNN baselines, where the edge-level temporal model is the dominant memory bottleneck. As Section~\ref{sec:rankk} explains, GRU-GCN and EvoBrain must retain one edge activation per pairwise edge at every time step, whereas \method bypasses dynamic graph construction entirely and stores only one membership weight per token-hyperedge pair. Empirically, the measured total GPU memory ratios are $13.84\times$ (GRU-GCN) and $11.96\times$ (EvoBrain). The same bottleneck appears at inference time: \method reduces per-segment latency by up to 95.5\% relative to these edge-stream baselines.
Large pretrained encoders (BIOT, LaBraM, EEGPT) are both memory-intensive ($3.07$--$10.75\times$) and slow to train ($2.84$--$7.02\times$). The lighter dynamic-GNN baselines, EvolveGCN and DCRNN, use less memory than \method ($0.19\times$, $0.21\times$), but pay for it with $2.40\times$ and $7.92\times$ longer training time.
Together, these choices yield state-of-the-art AUROC on TUSZ 60\,s at substantially lower compute and memory cost than the competitive baselines.

\begin{table}[t]
  \centering
  \caption{
  Computational efficiency of seizure detection on TUSZ 60\,s.
    Inference denotes average milliseconds per EEG segment.
    Memory-cost values above 1 indicate higher cost than ours, whereas values below 1 indicate lower cost.
  }
  \label{tab:efficiency}
  \small
  \setlength{\tabcolsep}{4pt}
  \renewcommand{\arraystretch}{1.05}
  \resizebox{\linewidth}{!}{%
  \begin{tabular}{lrrrr}
  \toprule
  Method & Time (s/ep.) & Infer. (ms/seg.) & Memory (MB) & Memory cost \\
  \midrule
  LSTM       & 31.30 & 0.013 & 88.0    & $0.26\times$ \\
  CNN-LSTM   & 29.90 & 0.264 & 1208.1  & $3.63\times$ \\
  BIOT       & 86.07 & 3.264 & 1850.7  & $5.56\times$ \\
  LaBraM     & 212.9 & 9.429 & 3576.7  & $10.75\times$ \\
  EEGPT      & 151.2 & 4.425 & 1022.1  & $3.07\times$ \\
  EvolveGCN  & 72.89 & 0.691 & 62.3    & $0.19\times$ \\
  DCRNN      & 240.2 & 3.869 & 71.2    & $0.21\times$ \\
  GraphS4mer & 30.29 & 0.181 & 340.4   & $1.02\times$ \\
  GRU-GCN    & 31.13 & 2.340 & 4605.5  & $13.84\times$ \\
  EvoBrain   & 57.99 & 4.781 & 3981.7  & $11.96\times$ \\
  \midrule
  \method ($E_h{=}1$) & 29.10 & 0.216 & 332.2 & $1.00\times$ \\
  \method ($E_h{=}2$) & 29.70 & 0.219 & 332.5 & $1.00\times$ \\
  \method ($E_h{=}3$) & 30.32 & 0.218 & 332.8 & $1.00\times$ \\
  \bottomrule
  \end{tabular}
  }
\end{table}

\begin{figure}[!t]
  \centering
  \includegraphics[width=0.9\linewidth]{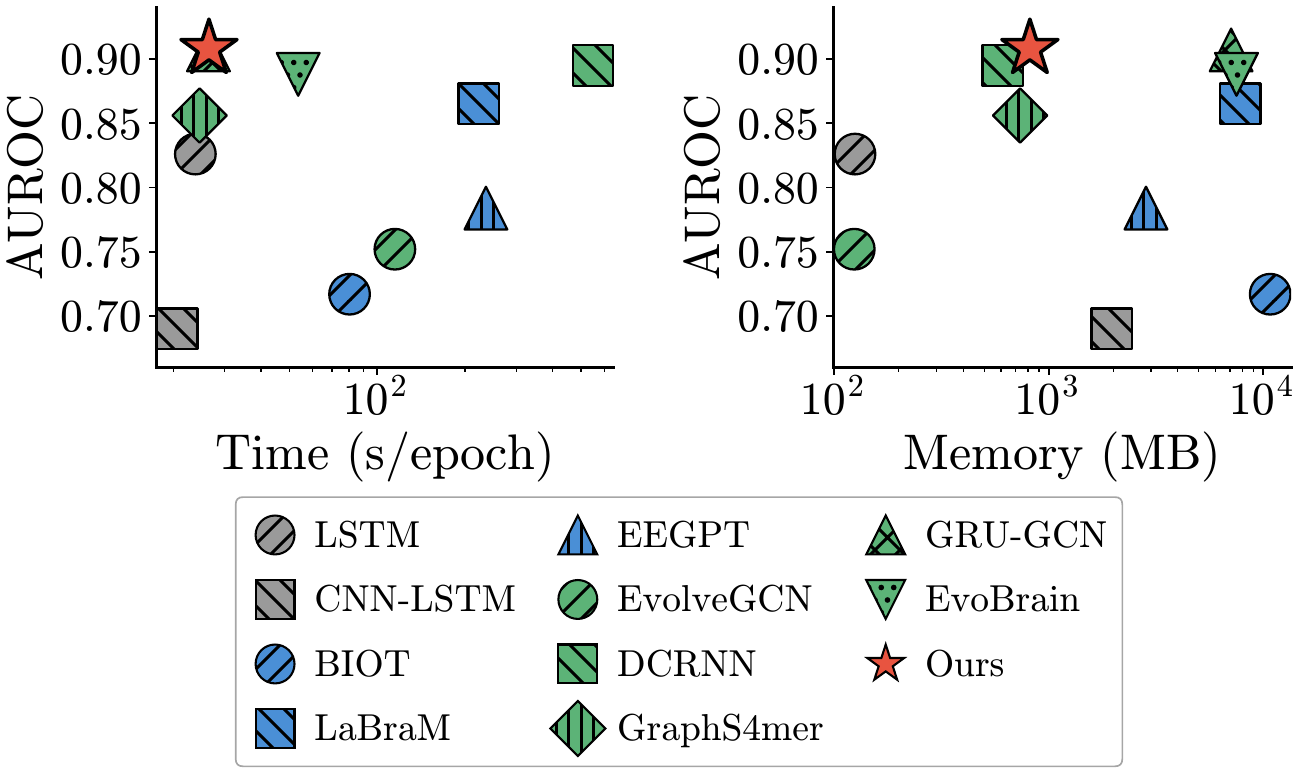}
\caption{Accuracy-efficiency plots on TUSZ 60\,s detection, showing AUROC versus training time per epoch (left) and peak GPU memory (right).}
  \Description{
  Two scatter plots showing AUROC versus time (left) and AUROC versus GPU memory (right) for every baseline and \method. \method occupies the top-left region of both panels.
  }
  \label{fig:pareto}
\end{figure}

\begin{figure*}[!t]
  \centering
  \begin{minipage}[t]{0.62\linewidth}
    \centering
    \includegraphics[width=\linewidth]{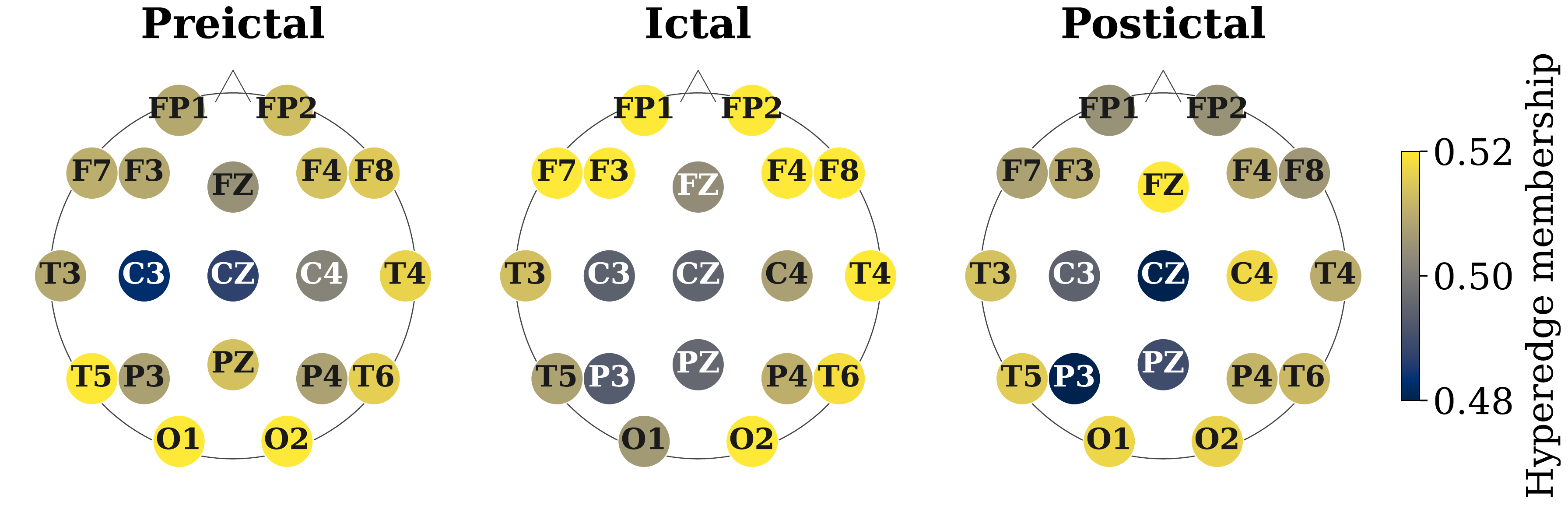}
    \vspace{0.2em}
    {\small (a) Preictal $\to$ ictal $\to$ postictal topomap}
  \end{minipage}
  \hfill
  \begin{minipage}[t]{0.34\linewidth}
    \centering
    \includegraphics[width=\linewidth]{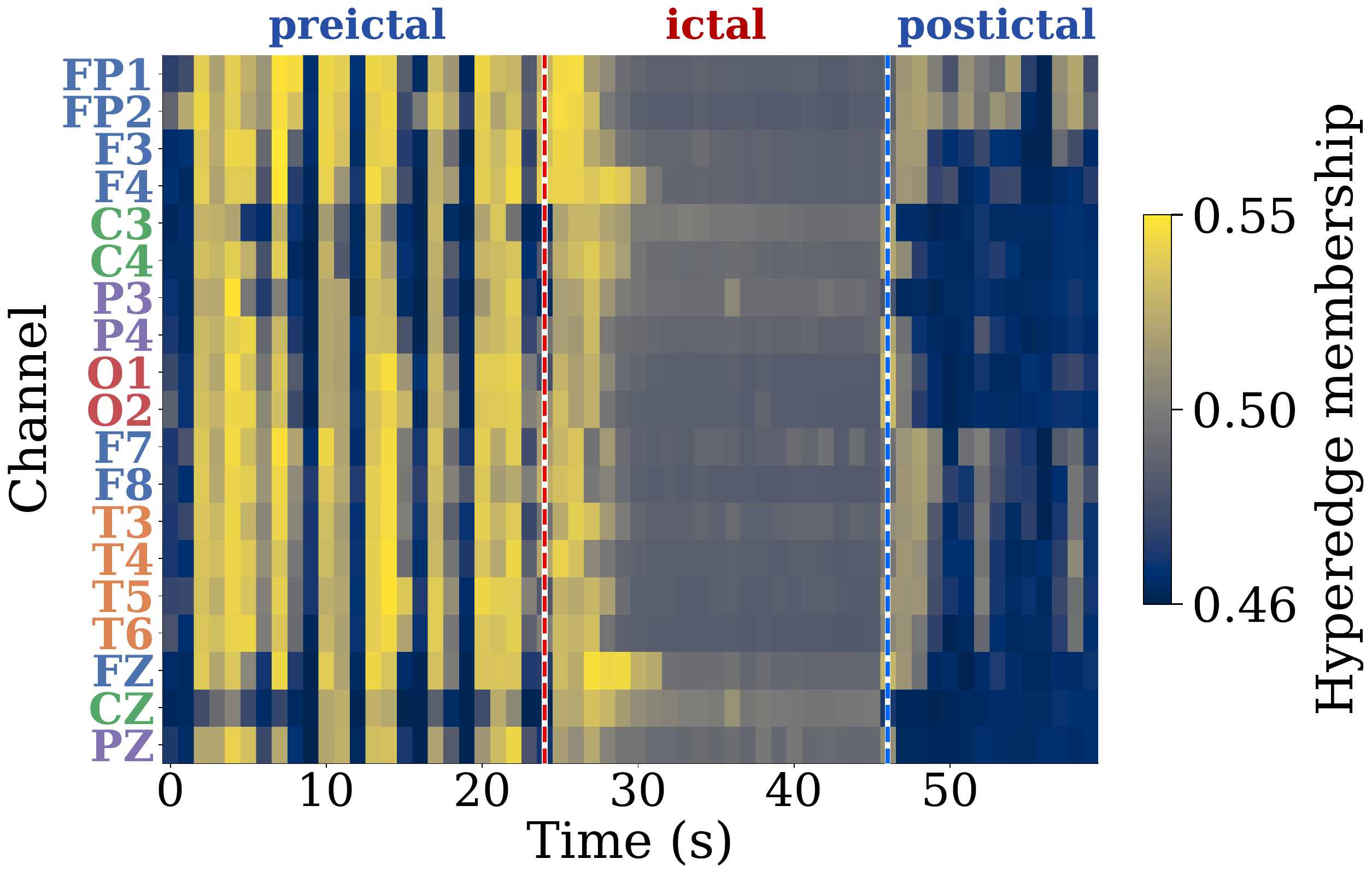}
    \vspace{0.2em}
    {\small (b) Preictal $\to$ ictal $\to$ postictal heatmap}
  \end{minipage}

  \caption{Membership of one learned hyperedge on a TUSZ 60\,s clip that traverses the full preictal $\to$ ictal $\to$ postictal trajectory. (a) Per-channel topomaps for each phase. (b) Membership over time. Dashed lines mark the labeled ictal interval.
  }
  \Description{
  A topomap and a heatmap showing hyperedge membership rising before the labeled seizure onset, peaking during the labeled ictal interval, and decaying after offset.
  }
  \label{fig:viz}
\end{figure*}

\begin{figure}[!t]
\centering
\includegraphics[width=0.95\columnwidth]{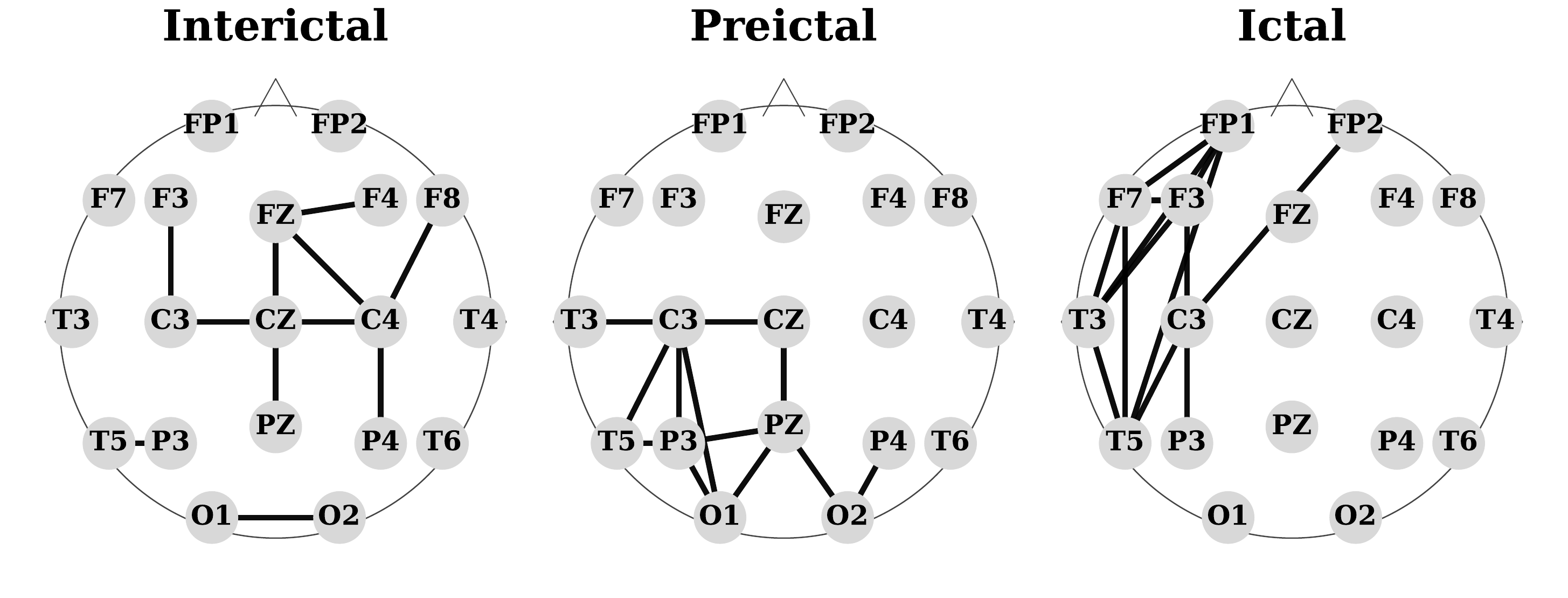}
\caption{
EvoBrain's pairwise EEG connectivity visualization across interictal, preictal, and ictal states, shown alongside Figure~\ref{fig:viz} for comparison.
}
\Description{Three side-by-side topographic plots reproducing EvoBrain's pairwise EEG connectivity visualization for interictal, preictal, and ictal states, drawn as channel-to-channel edges between electrode pairs.}
\vspace{-1.2em}
\label{fig:evobrain_viz}
\end{figure}

\subsection{Visualization: From Pairwise Relations to Spatiotemporal Hyperedges}
\label{sec:visualization}

Figure~\ref{fig:viz} visualizes the membership of one learned hyperedge on a TUSZ 60\,s clip spanning the preictal $\to$ ictal $\to$ postictal trajectory. The membership exhibits fluctuating preictal bursts before the annotated onset, transitions to a sustained and spatially broad pattern during the ictal interval, and is suppressed after the annotated offset. This three-phase membership pattern constitutes a compact, phase-discriminative signature from a single hyperedge.

Figure~\ref{fig:evobrain_viz} provides a pairwise EvoBrain visualization for comparison. EvoBrain represents EEG connectivity through pairwise channel graphs at selected time points, so each visualization shows relations only within a fixed temporal slice. In contrast, \method learns memberships over channel-time tokens, so a single hyperedge can trace a coordinated spatiotemporal pattern across the preictal, ictal, and postictal trajectory. This representation is more compact than a per-time-step pairwise graph and preserves cross-time structure that pairwise visualizations decompose. 

\subsection{Hyperedge Sensitivity}
\label{sec:eh_sensitivity}

\begin{figure}[!t]
  \centering
  \includegraphics[width=0.95\linewidth]{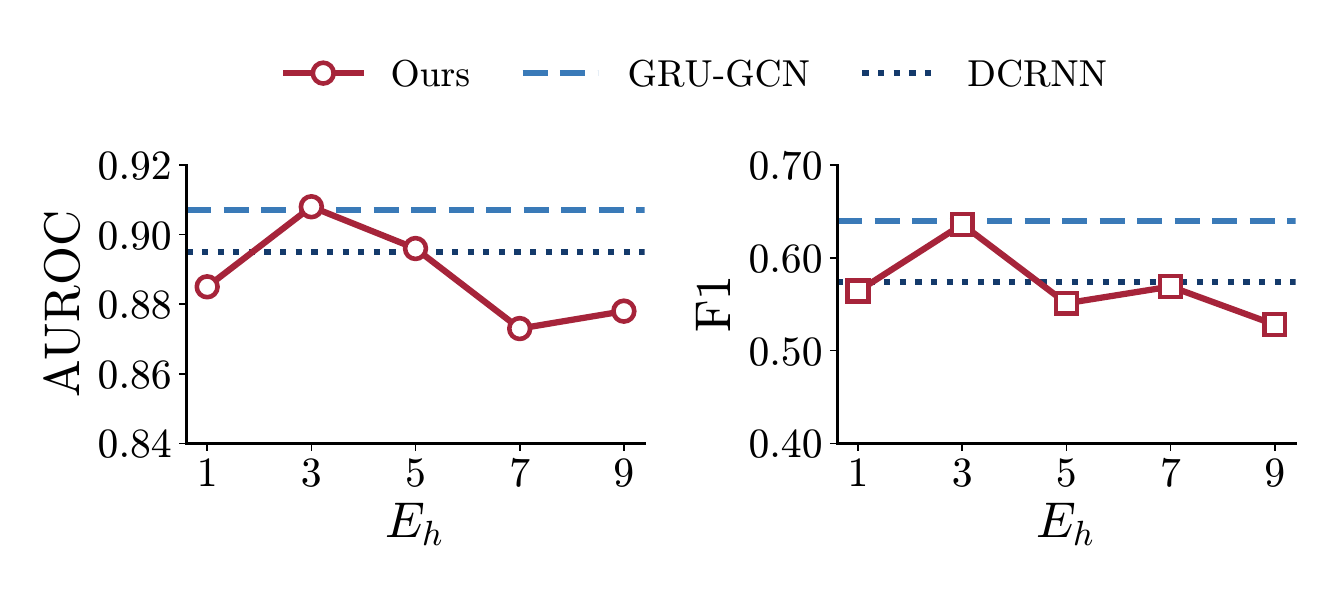}
  \caption{Sensitivity of \method to the number of hyperedges $E_h$ on TUSZ 60\,s detection. Left: AUROC. Right: F1.}
  \Description{
  Two line plots showing AUROC and F1 as the number of hyperedges $E_h$ changes from 1 to 9.
  The curves remain stable across the sweep and stay close to or above the baseline reference lines.
  }
  \label{fig:eh_sensitivity}
  \vspace{-1.2em}
\end{figure}

Figure~\ref{fig:eh_sensitivity} examines how performance varies with the number of hyperedges $E_h$ on the TUSZ 60\,s detection task. Across $E_h \in \{1, 3, 5, 7, 9\}$, AUROC remains within $[0.868, 0.908]$ and F1 within $[0.537, 0.636]$, with maximum spreads of 0.040 and 0.099.

This stability indicates that the seizure-relevant structure on TUSZ is captured by a few coordinated channel-time groups. A single learnable hyperedge ($E_h=1$) already captures the phase-discriminative behavior shown in Figure~\ref{fig:viz}, and $E_h=3$ gives the best 60\,s result within a narrow, stable performance range. This robustness across $E_h$ supports the efficiency argument in Section~\ref{sec:rankk}: \method retains global channel-time reach through a low-rank hyperedge representation, so it can model seizure-related coordination without scaling up to many pairwise edges or many latent groups.
\label{sec:abl}

\section{Conclusion}
We proposed \method, a lightweight encoder that replaces per-time-step pairwise graph construction with a small set of learnable spatiotemporal hyperedges over channel-time tokens. On TUSZ and CHB-MIT, the same encoder reaches state-of-the-art AUROC on window-based detection, one-second point-wise detection, and preictal seizure prediction. The largest margin appears on long-clip prediction, and \method uses up to an order of magnitude less peak GPU memory than the edge-stream dynamic-GNN baselines. 
A single learned hyperedge remains active across the entire preictal $\to$ ictal $\to$ postictal trajectory, which indicates that one group embedding suffices to support both detection and prediction. These results indicate that a compact spatiotemporal hyperedge formulation can match or exceed edge-stream dynamic GNNs on EEG seizure tasks at substantially lower computational cost, particularly when the seizure-relevant signal is distributed across channels rather than concentrated in clearly separable pairwise relations.

\appendix
\section{Additional Hyperedge Visualizations}
\label{app:viz}

Across clips and seizure subtypes, we observe the same qualitative pattern as in Figure~\ref{fig:viz}. Preictal segments show a mild, laterally focused rise in membership, ictal segments become broadly active across channels, and postictal segments retain a decaying residual structure that remains distinguishable from the interictal baseline.

\begin{figure*}[!htbp]
  \centering
  \begin{minipage}[t]{0.45\linewidth}
    \centering
    \includegraphics[width=\linewidth]{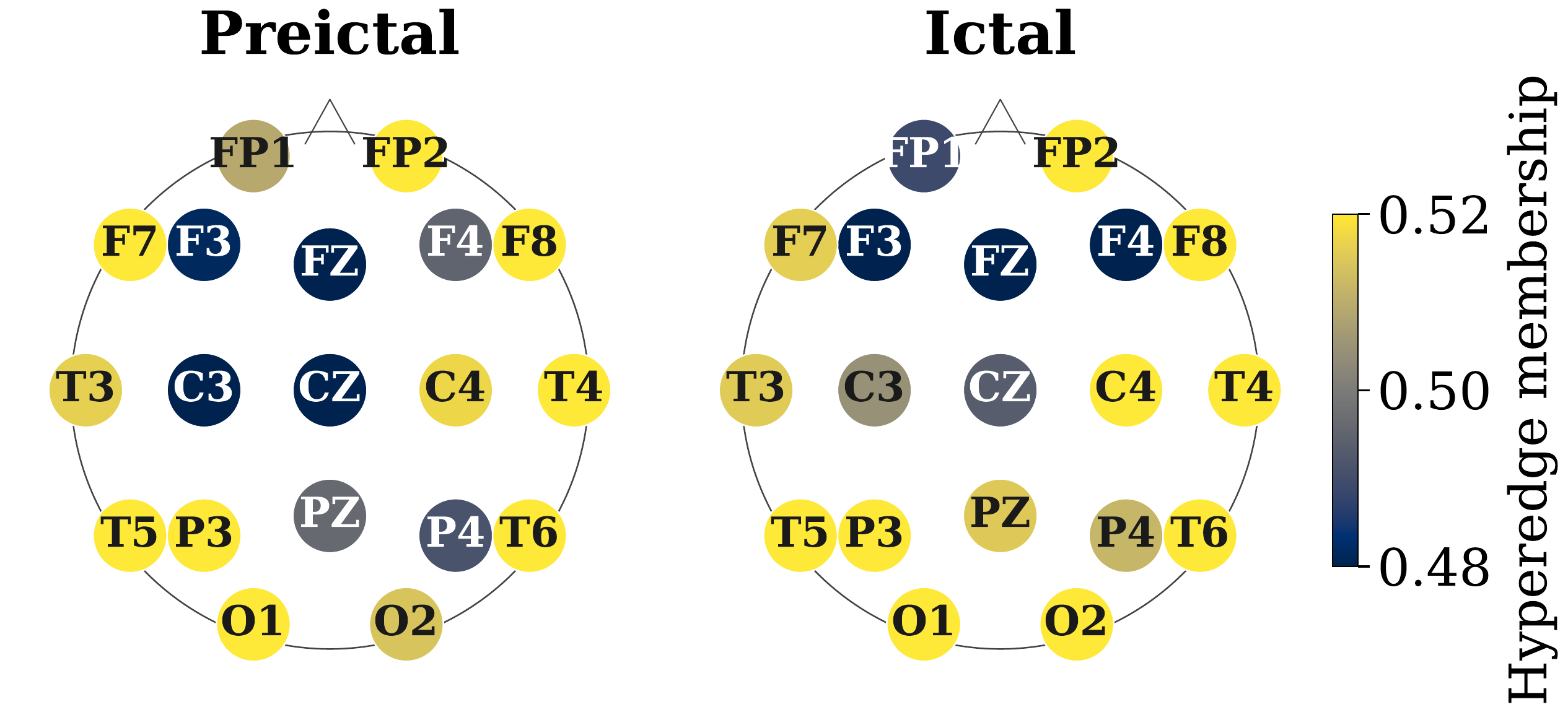}
    \vspace{0.05em}
    {\footnotesize (a) Preictal $\to$ ictal topomap (clip 359)}
  \end{minipage}\hspace{0.02\linewidth}\begin{minipage}[t]{0.3\linewidth}
    \centering
    \includegraphics[width=\linewidth]{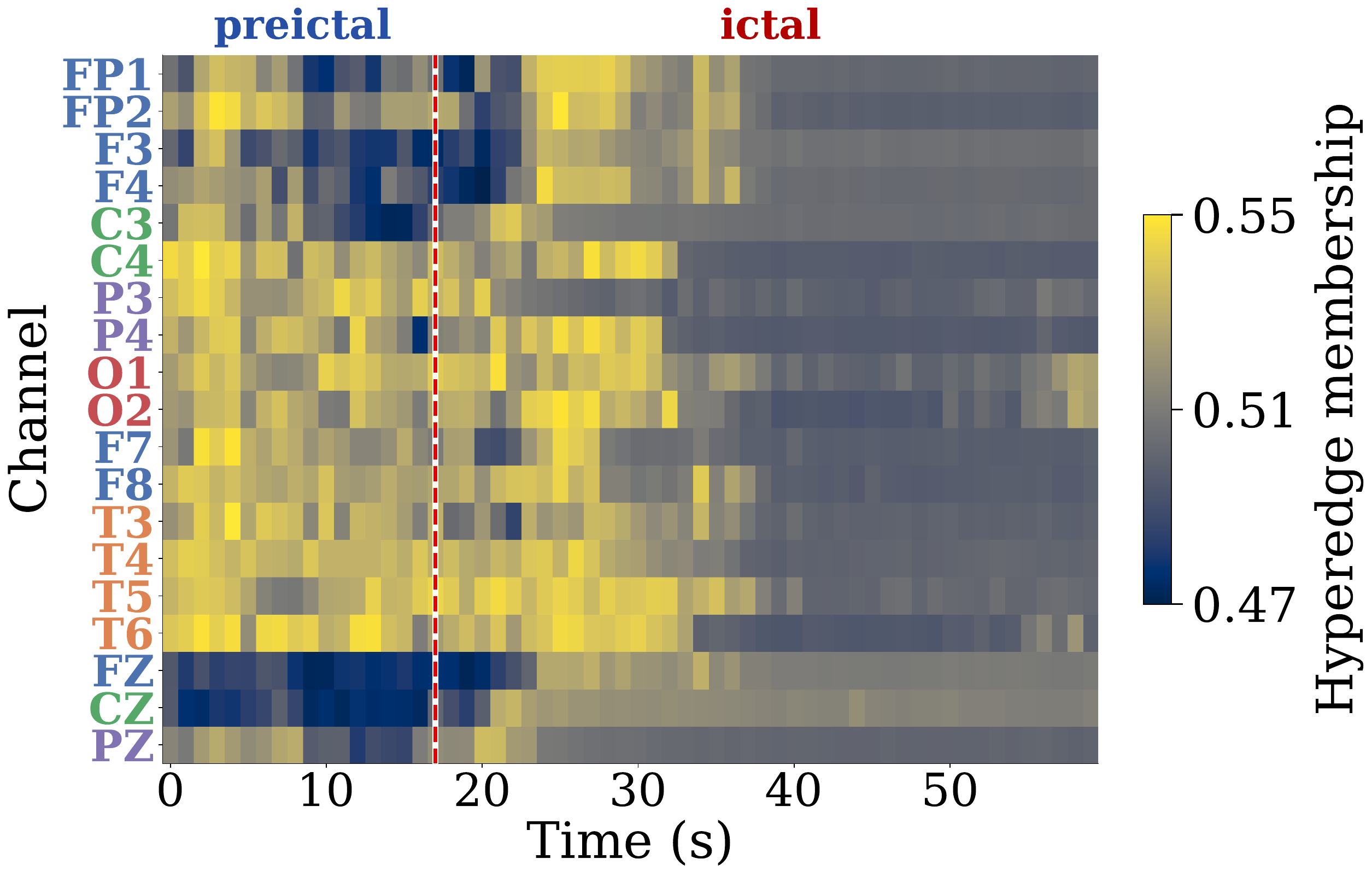}
    \vspace{0.05em}
    {\footnotesize (b) Preictal $\to$ ictal heatmap (clip 359)}
  \end{minipage}

  \vspace{0.1em}

  \begin{minipage}[t]{0.45\linewidth}
    \centering
    \includegraphics[width=\linewidth]{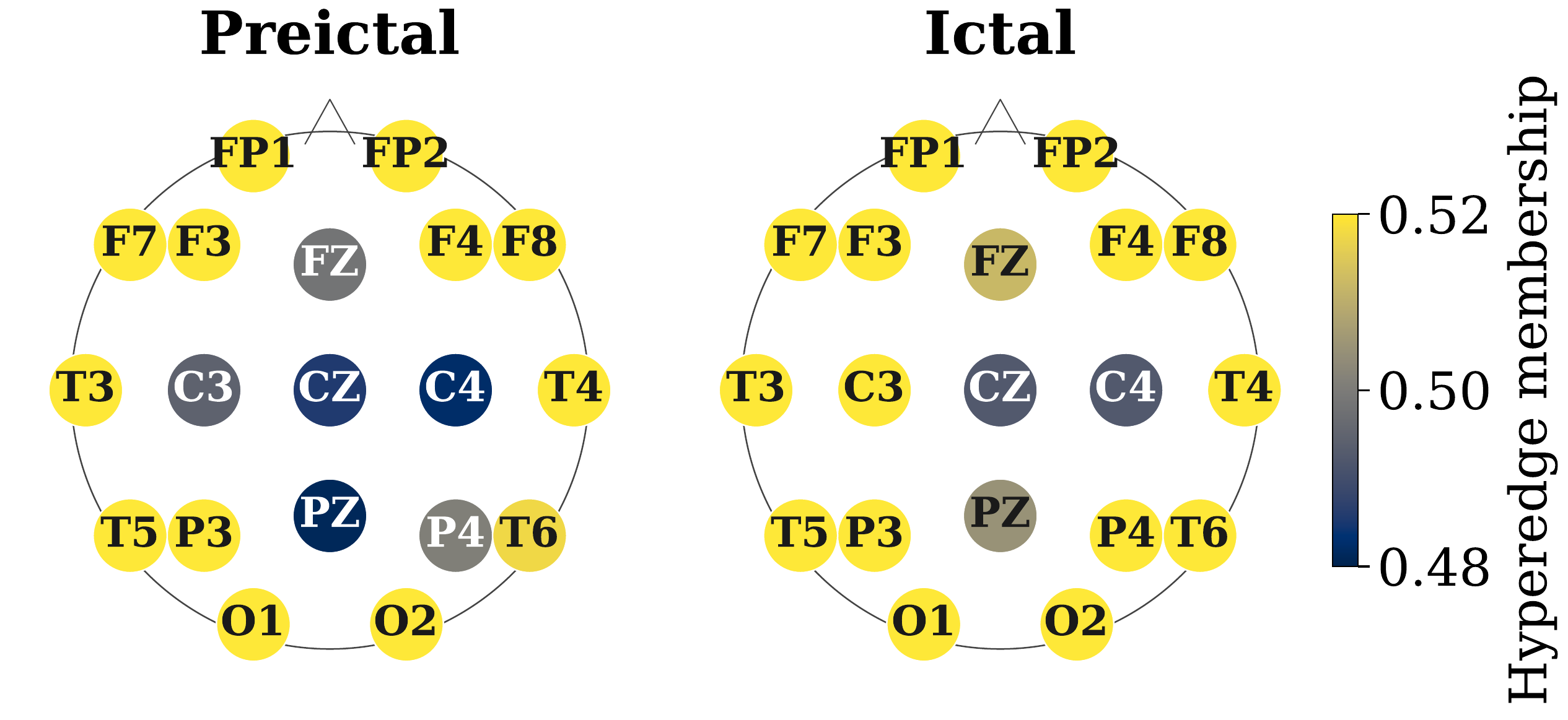}
    \vspace{0.05em}
    {\footnotesize (c) Preictal $\to$ ictal topomap (clip 3907)}
  \end{minipage}\hspace{0.02\linewidth}\begin{minipage}[t]{0.3\linewidth}
    \centering
    \includegraphics[width=\linewidth]{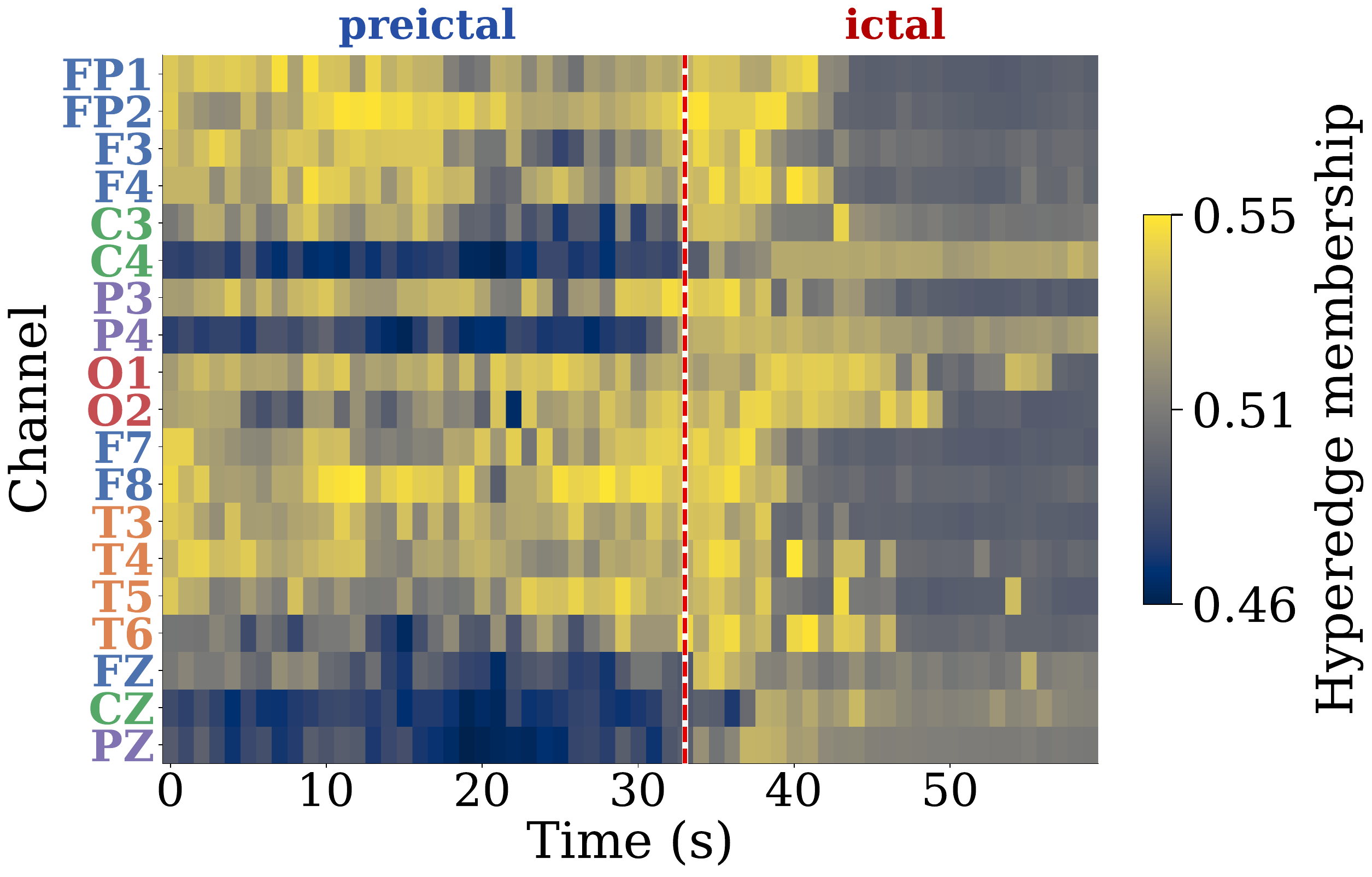}
    \vspace{0.05em}
    {\footnotesize (d) Preictal $\to$ ictal heatmap (clip 3907)}
  \end{minipage}

  \vspace{0.1em}

  \begin{minipage}[t]{0.45\linewidth}
    \centering
    \includegraphics[width=\linewidth]{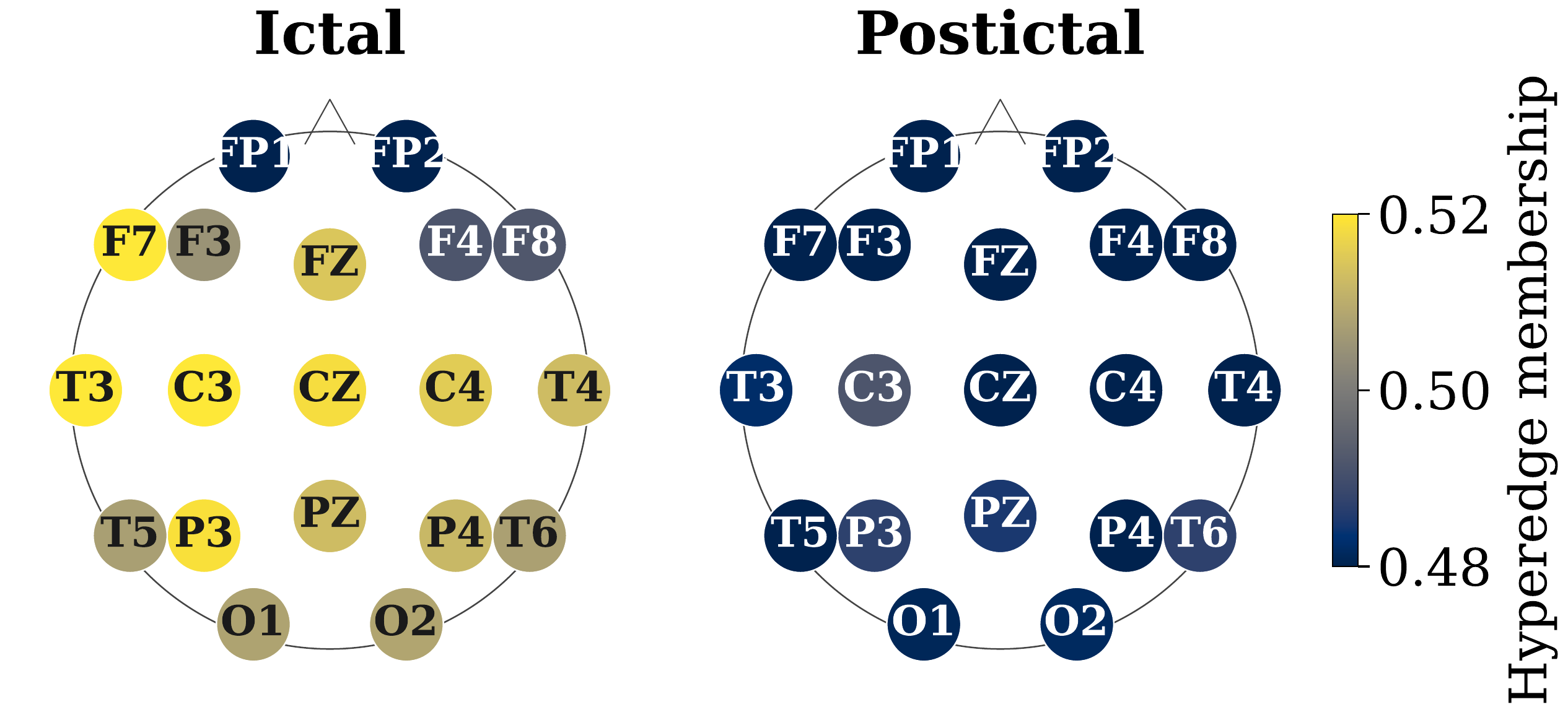}
    \vspace{0.05em}
    {\footnotesize (e) Ictal $\to$ postictal topomap (clip 547)}
  \end{minipage}\hspace{0.02\linewidth}\begin{minipage}[t]{0.32\linewidth}
    \centering
    \includegraphics[width=\linewidth]{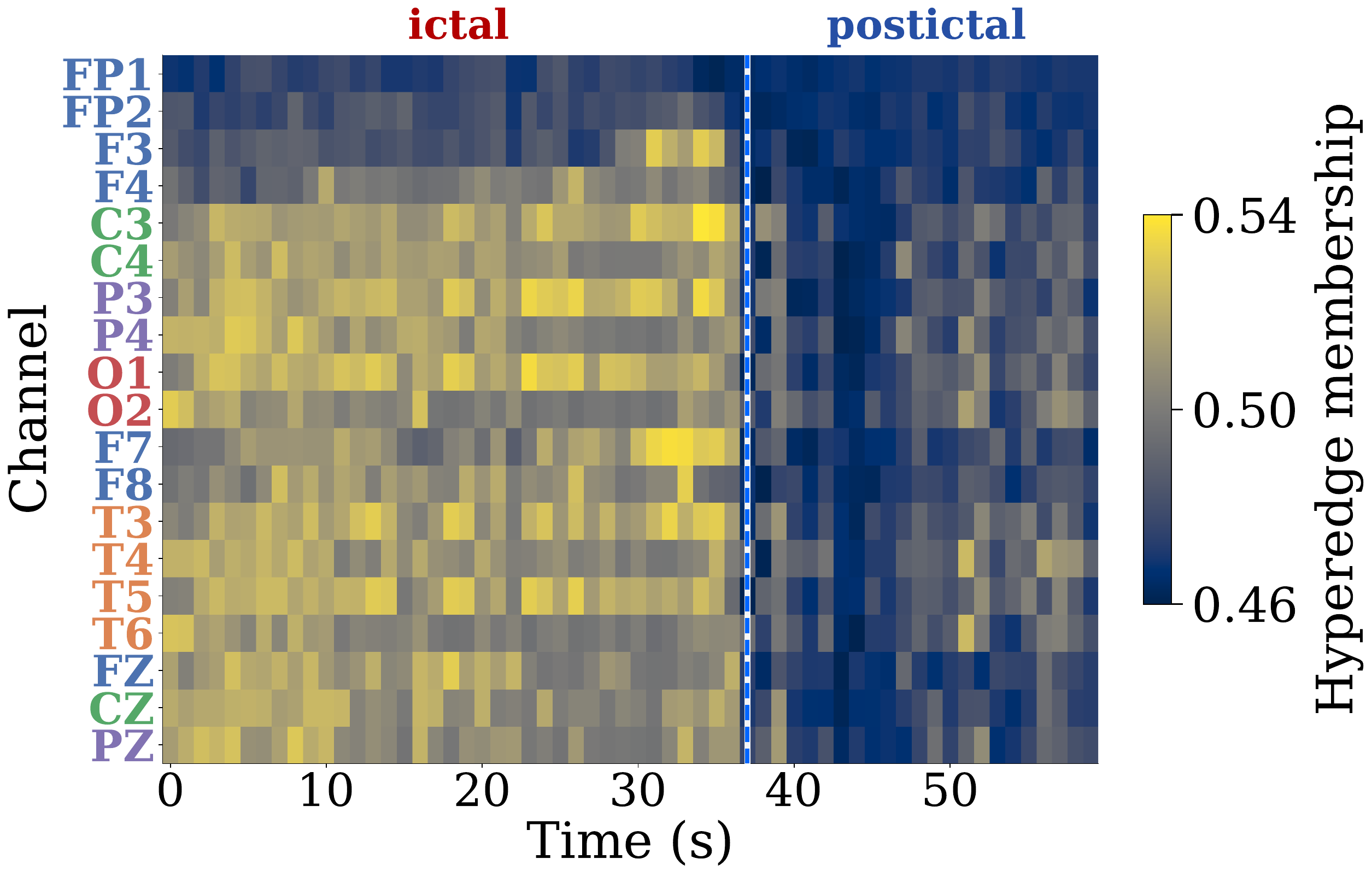}
    \vspace{0.05em}
    {\footnotesize (f) Ictal $\to$ postictal heatmap (clip 547)}
  \end{minipage}

  \vspace{0.1em}

  \begin{minipage}[t]{0.45\linewidth}
    \centering
    \includegraphics[width=\linewidth]{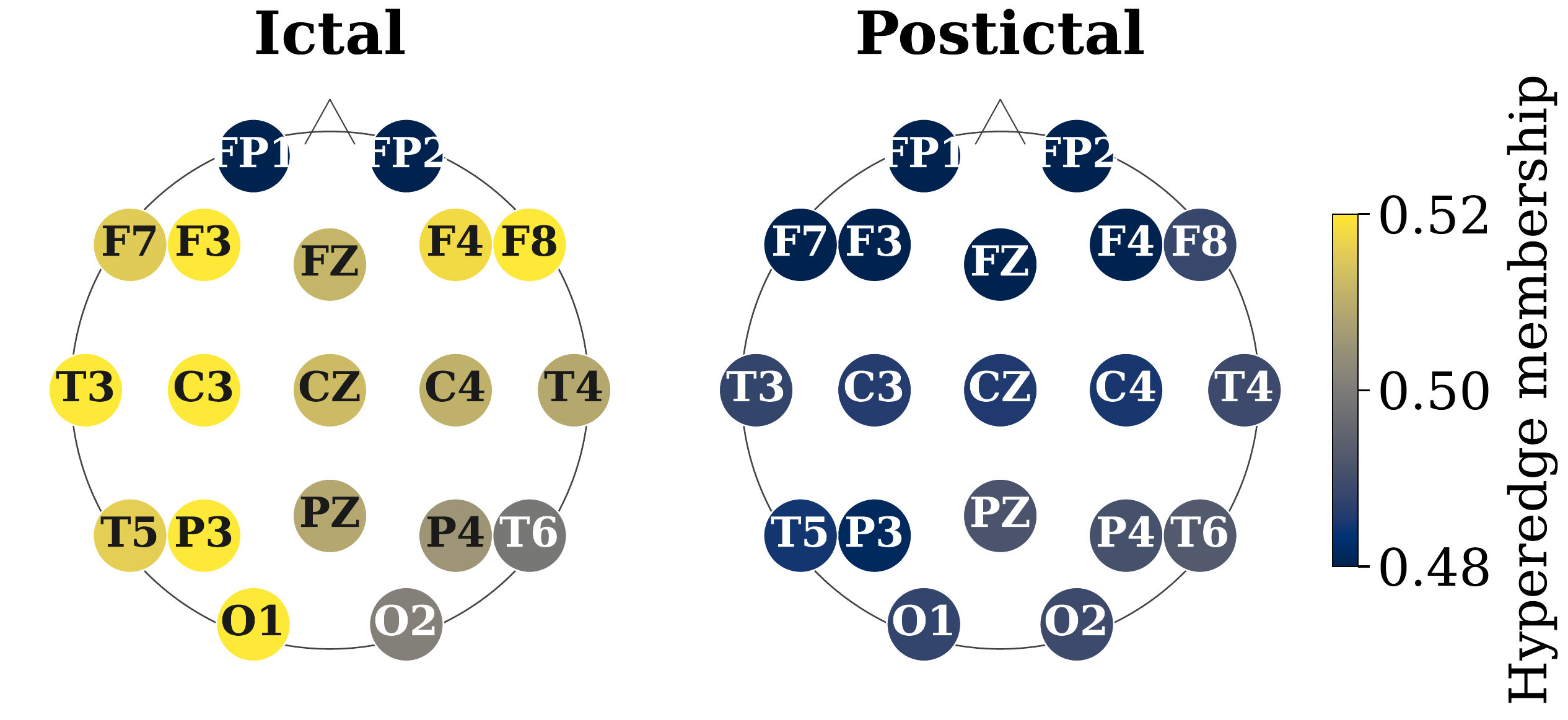}
    \vspace{0.05em}
    {\footnotesize (g) Ictal $\to$ postictal topomap (clip 3937)}
  \end{minipage}\hspace{0.02\linewidth}\begin{minipage}[t]{0.3\linewidth}
    \centering
    \includegraphics[width=\linewidth]{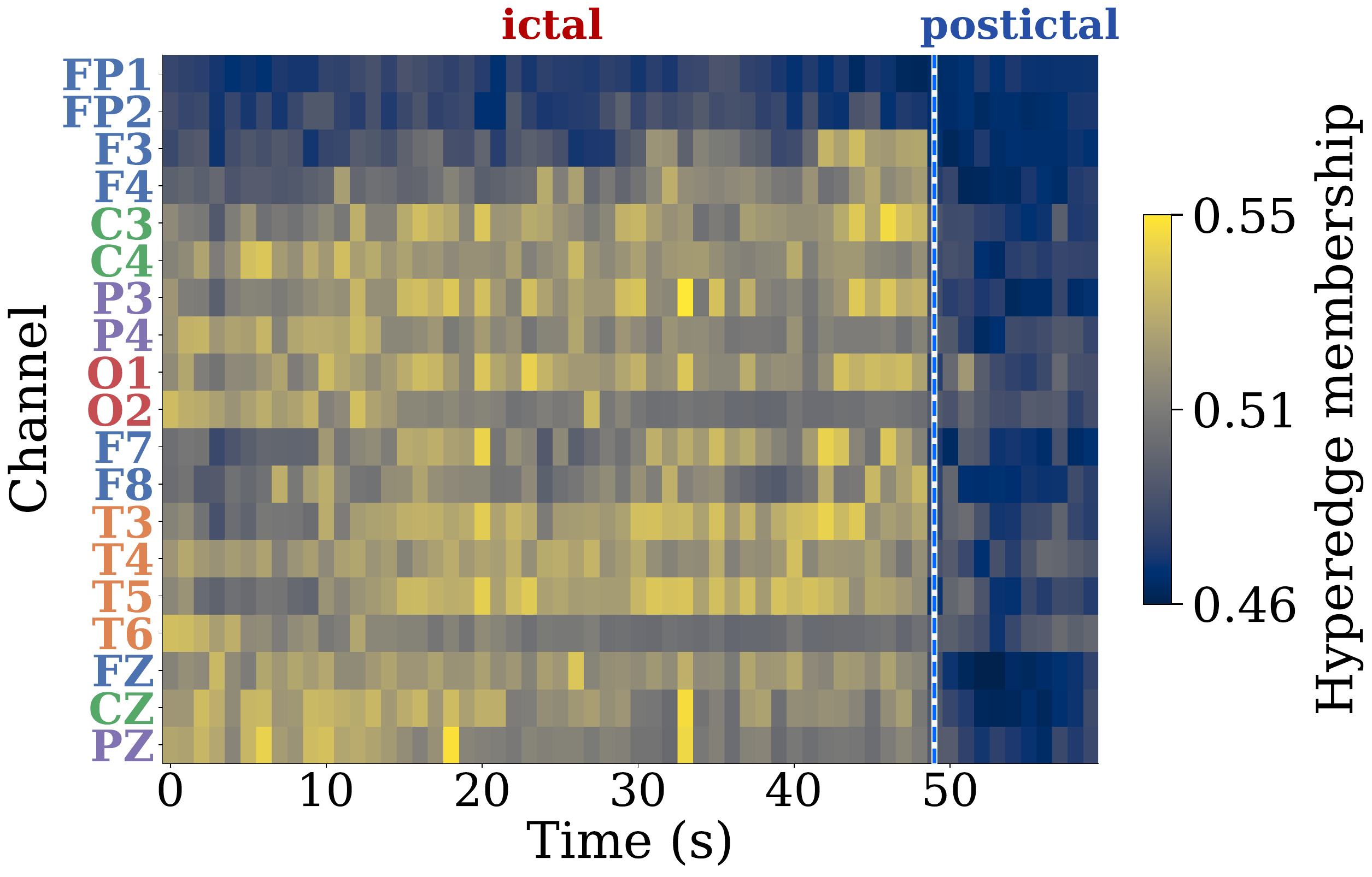}
    \vspace{0.05em}
    {\footnotesize (h) Ictal $\to$ postictal heatmap (clip 3937)}
  \end{minipage}

  \vspace{0.1em}

  \begin{minipage}[t]{0.45\linewidth}
    \centering
    \includegraphics[width=\linewidth]{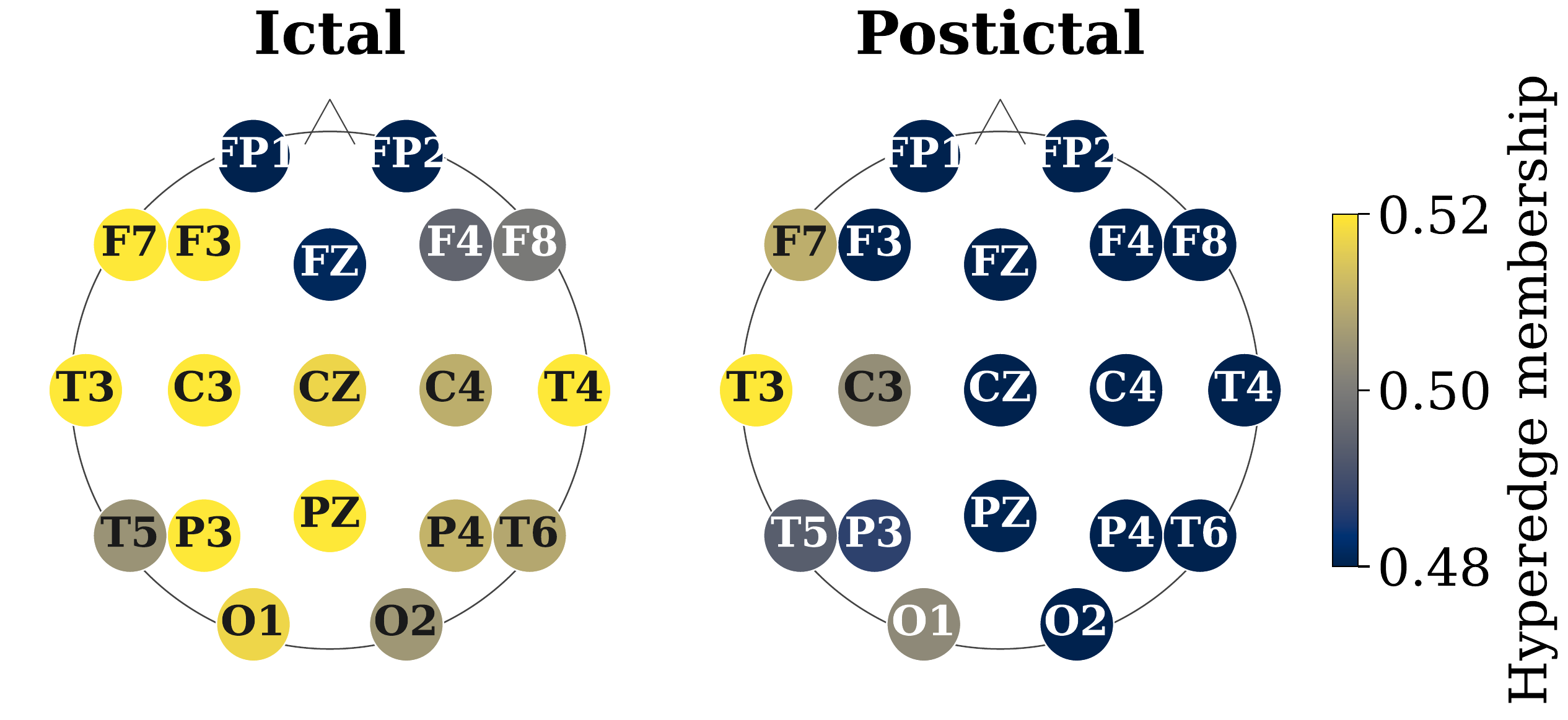}
    \vspace{0.05em}
    {\footnotesize (i) Ictal $\to$ postictal topomap (clip 4303)}
  \end{minipage}\hspace{0.02\linewidth}\begin{minipage}[t]{0.3\linewidth}
    \centering
    \includegraphics[width=\linewidth]{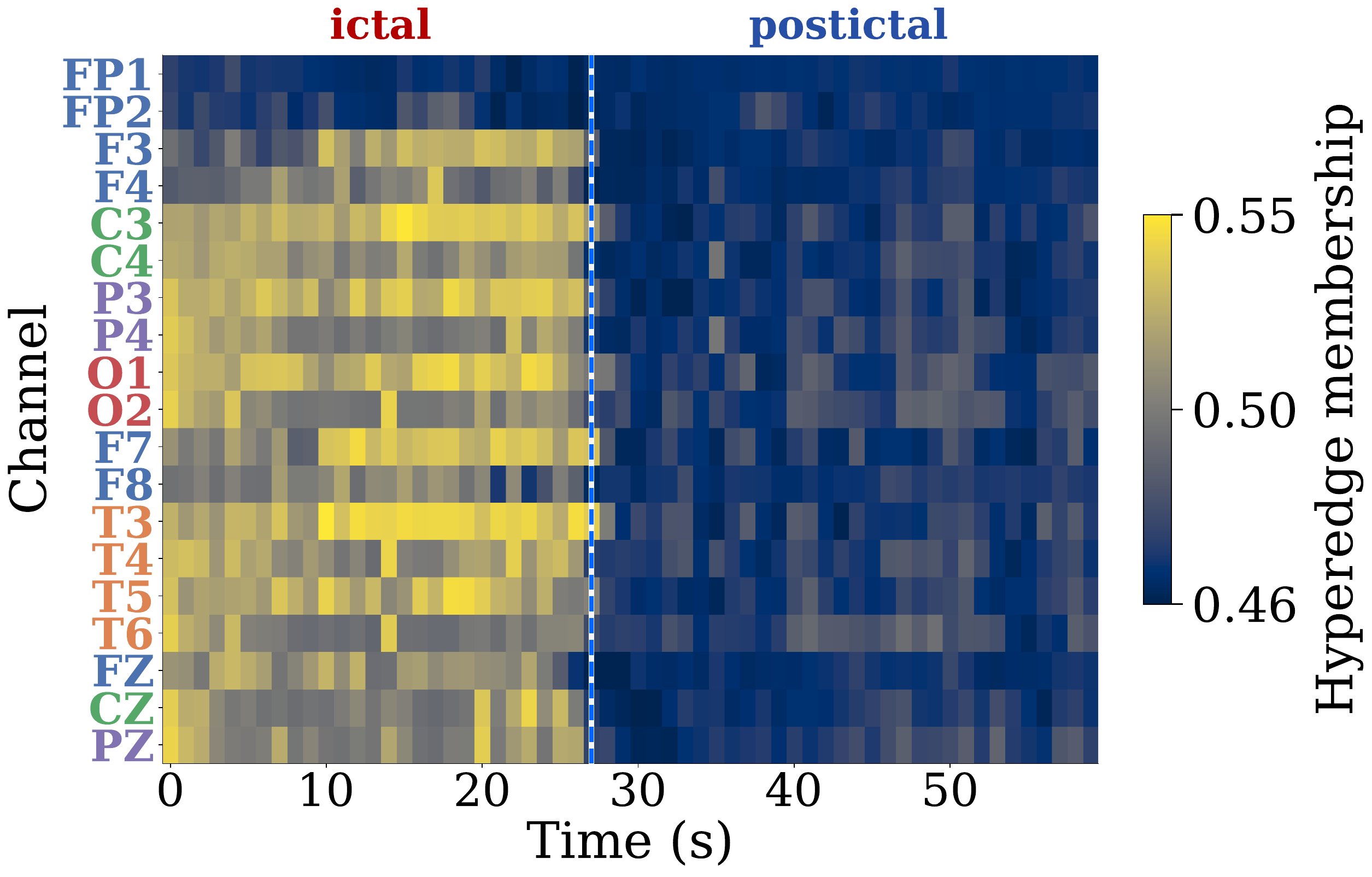}
    \vspace{0.05em}
    {\footnotesize (j) Ictal $\to$ postictal heatmap (clip 4303)}
  \end{minipage}

  \caption{
  Additional learned-hyperedge visualizations on TUSZ 60\,s test clips.
  Rows (a)-(b) and (c)-(d) show preictal $\to$ ictal transitions; rows (e)-(f), (g)-(h), and (i)-(j) show ictal $\to$ postictal transitions.
  In every preictal $\to$ ictal example, the membership rises sharply across channels before or at the annotated onset; in every ictal $\to$ postictal example, the membership decays through the annotated offset.
  }
  \Description{
  Five rows of paired topomap and heatmap visualizations showing the learned hyperedge membership during preictal-to-ictal and ictal-to-postictal transitions across different test clips.
  }
\label{fig:appendix_viz}
\end{figure*}

\section{Hyperparameters}
\label{app:params}
For \method, we grid search learning rate $\{3\times10^{-4}, 5\times10^{-4}, 10^{-3}, 2\times10^{-3}\}$, weight decay $\{5\times10^{-4}, 10^{-3}, 2\times10^{-3}\}$, dropout $\{0.0, 0.1, 0.2\}$, hidden dimension $d \in \{128, 192\}$ (the feature width of the Mamba backbone and hyperedge layers), and the temporal-attention gate $\beta \in \{0, 1\}$, selecting by dev AUROC. The selected hidden dimension is $d=128$ ($d=192$ for 12\,s window detection on both datasets), with $\beta=1$ for detection and $\beta=0$ for prediction. The learning rate is $10^{-3}$ ($5{\times}10^{-4}$ for CHB-MIT 12\,s window detection and $3{\times}10^{-4}$ for 12\,s point-wise detection), and the weight decay is $5{\times}10^{-4}$. Dropout is $0.2$ on TUSZ and $0.1$ on CHB-MIT for window-based detection, and $0.0$ otherwise. All models are trained with Adam and gradient clipping 5 for up to 40 epochs, with early stopping on dev AUROC using patience 5. Training batch sizes are 128 for TUSZ 12\,s window detection, 64 for prediction, and 32 otherwise, and the test batch size is 64.

\bibliographystyle{ACM-Reference-Format}
\balance
\bibliography{bib}

\end{document}